\DeclareMathOperator*{\argmax}{argmax}
\DeclareMathOperator*{\minimize}{minimize}
\newcommand{\cmark}{\ding{51}}
\newcommand{\xmark}{\ding{55}}
\newtheorem{corollary}{Corollary}
\newtheorem{proposition}{Proposition}
\newtheorem{lemma}{Lemma}
\newcommand{\p}[1]{\textbf{#1.}}
\title{Maximum-Entropy Adversarial Data Augmentation for Improved Generalization and Robustness}
\author{%
  Long Zhao$^1$ \qquad Ting Liu$^2$ \qquad Xi Peng$^3$ \qquad Dimitris Metaxas$^1$\\
  $^1$Rutgers University \qquad $^2$Google Research \qquad $^3$University of Delaware\\
  \texttt{\{lz311,dnm\}@cs.rutgers.edu} \qquad \texttt{liuti@google.com} \qquad \texttt{xipeng@udel.edu}\\
}
\begin{document}

\maketitle

\begin{abstract}
  Adversarial data augmentation has shown promise for training robust deep neural networks against unforeseen data shifts or corruptions. However, it is difficult to define heuristics to generate effective fictitious target distributions containing ``hard'' adversarial perturbations that are largely different from the source distribution. In this paper, we propose a novel and effective regularization term for adversarial data augmentation. We theoretically derive it from the information bottleneck principle, which results in a maximum-entropy formulation. Intuitively, this regularization term encourages perturbing the underlying source distribution to enlarge predictive uncertainty of the current model, so that the generated ``hard'' adversarial perturbations can improve the model robustness during training. Experimental results on three standard benchmarks demonstrate that our method consistently outperforms the existing state of the art by a statistically significant margin. Our code is available at \url{https://github.com/garyzhao/ME-ADA}.
\end{abstract}

\section{Introduction}\label{sec:introduction}

Deep neural networks can achieve good performance on the condition that the training and testing data are drawn from the same distribution. However, this condition might not hold true in practice. Data shifts caused by mismatches between training and testing domain~\cite{balaji2018metareg,li2019episodic,sinha2018certifying,volpi2018generalizing,zhao2019semantic}, small corruptions to data distributions~\cite{hendrycks2019benchmarking,zhao2020knowledge}, or adversarial attacks~\cite{goodfellow2015explaining,kurakin2017adversarial} are
often inevitable in real-world applications, and lead to significant performance degradation of deep learning models. Recently, adversarial data augmentation~\cite{ganin2016domain,sinha2018certifying,volpi2018generalizing} emerges as a strong baseline where fictitious target distributions are generated by an adversarial loss to resemble unforseen data shifts, and used to improve model robustness through training. The adversarial loss is leveraged to produce perturbations that fool the current model. However, as shown in~\cite{qiao2020learning}, this heuristic loss function is insufficient to synthesize large data shifts, i.e., ``hard'' adversarial perturbations from the source domain, which makes the model still vulnerable to severely shifted or corrupted testing data.

To mitigate this issue, we propose a regularization technique for adversarial data augmentation from an information theory perspective using the \emph{Information Bottleneck} (IB)~\cite{tishby1999information} principle. The IB principle encourages the model to learn an optimal representation by diminishing the irrelevant parts of the input variable that do not contribute to the prediction. Recently, there has been a surge of interest in combining the IB method with training of deep neural networks~\cite{achille2018information,amjad2019learning,elad2019direct,kolchinsky2019caveats,ozair2019wasserstein,tschannen2020mutual}, while its effectiveness for adversarial data augmentation still remains unclear.

In the IB context, a neural network does not generalize well on out-of-domain data often when the information of the input cannot be well-compressed by the model, i.e., the mutual information of the input and its associated latent representation is high~\cite{shamir2010learning,tishby2015deep}. Motivated by this conceptual observation, we aim to regularize adversarial data augmentation through maximizing the IB function. Specifically, we produce ``hard'' fictitious target domains that are largely shifted from the source domain by enlarging the mutual information of the input and latent distribution within the current model. However, mutual information is shown to be intractable in the literature~\cite{belghazi2018mutual,paninski2003estimation,song2020understanding}, and therefore directly optimizing this objective is challenging.

In this paper, we develop an efficient \emph{maximum-entropy} regularizer to achieve the same goal by making the following contributions: (i) to the best of our knowledge, we are the first work to investigate adversarial data argumentation from an information theory perspective, and address the problem of generating ``hard'' adversarial perturbations from the IB principle which has not been studied yet; (ii) we theoretically show that the IB principle can be bounded by a maximum-entropy regularization term in the maximization phase of adversarial data argumentation, which results in a notable improvement over~\cite{volpi2018generalizing}; (iii) we also show that our formulation holds in an approximate sense under certain non-deterministic conditions (e.g., when the neural network is stochastic or contains Dropout~\cite{srivastava2014dropout} layers). Note that our maximum-entropy regularizer can be implemented by one line of code with minor computational cost, while it consistently and statistically significantly improves the existing state of the art on three standard benchmarks.

\section{Background and Related Work}

\p{Information Bottleneck Principle} We begin by summarizing the concept of information bottleneck and, along the way, introduce the notations. The \textit{Information Bottleneck} (IB)~\cite{tishby1999information} is a principled way to seek a latent representation $Z$ that an input variable $X$ contains about an output $Y$. Let $I(X; Z)$ be the \textit{mutual information} of $X$ and $Z$, i.e., $I(X; Z) = \mathbb{D}_\mathsf{KL} \left(p_{XZ} \middle\| p_X p_Z \right)$, where $\mathbb{D}_\mathsf{KL}$ denotes the \textit{KL-divergence}~\cite{kullback1951information}. Intuitively, $I(X; Z)$ measures the uncertainty in $X$ given $Z$. The representation $Z$ can be quantified by two terms: $I(X; Z)$ which reflects how much $Z$ compresses $X$, and $I(Y; Z)$ which reflects how well $Z$ predicts $Y$. In practice, this IB principle is explored by minimizing the following \textit{IB Lagrangian}:
\begin{equation}
\label{eq:ib}
\minimize \left\{ I(X; Z) - \lambda I(Y; Z) \right\},
\end{equation}
where $\lambda$ is a positive parameter that controls the trade-off between compression and prediction. By controlling the amount of compression within the representation $Z$ via the compression term $I(X; Z)$, we can tune desired characteristics of trained models such as robustness to adversarial samples~\cite{alemi2017deep}, generalization error~\cite{amjad2019learning,cheng2019utilizing,kolchinsky2019caveats,shamir2010learning,tishby2015deep,vera2018role}, and detection of out-of-distribution data~\cite{alemi2018uncertainty}.

\p{Domain Generalization} Domain adaptation~\cite{ganin2015unsupervised,tzeng2017adversarial} transfers models in source domains to related target domains with different distributions during the training procedure. On the other hand, domain generalization~\cite{balaji2018metareg,bousmalis2016domain,li2017deeper,li2018learning,li2019episodic,mancini2018best} aims to learn features that perform well when transferred to \emph{unseen} domains during evaluation. This paper further studies a more challenging setting named single domain generalization~\cite{qiao2020learning}, where networks are learned using \emph{one single} source domain compared with conventional domain generalization that requires \emph{multiple} training source domains. Recently, adversarial data augmentation~\cite{volpi2018generalizing} is proven to be a promising solution which synthesizes virtual target domains during training so that the generalization and robustness of the learned networks to unseen domains can be improved. Our approach improves it by proposing an efficient regularizer.

\p{Adversarial Data Augmentation} We are interested in the problems of training deep neural networks in a single source domain $P_0$ and deploying it to unforeseen domains $P$ following different underlying distributions. Let $X \in \mathcal{X}$ be random data points with associated labels $Y \in \mathcal{Y}$ ($|\mathcal{Y}|$ is finite) drawn from the source distribution $P_0$. We consider the following worst-case problem around $P_0$:
\begin{equation}
\label{eq:worst}
\minimize_{\theta \in \Theta} \left\{\sup_P \left\{\mathbb{E}[\mathcal{L}(\theta; X, Y)] : D_\theta(P, P_0) \leq \rho \right\}\right\},
\end{equation}
where $\theta \in \Theta$ is the network parameters, $\mathcal{L}: \Theta \times (\mathcal{X} \times \mathcal{Y}) \rightarrow \mathbb{R}$ is the loss function, and $D_\theta$ measures the distance between two distributions $P$ and $P_0$. We denote $\theta = \{\theta_f, \mathbf{w}\}$, where $\mathbf{w}$ represents the parameters of the final prediction layer and $\theta_f$ represents the parameters of the rest of the network. Letting $f(\theta_f; x)$ be the latent representation of input $x$, we feed it into a $|\mathcal{Y}|$-way classifier such that using the \textit{softmax activation}, the probability $p_{(i)}$ of the $i$-th class is:
\begin{equation}
p_{(i)}(\theta; x) = \frac{\exp \left(\mathbf{w}_i^\top f(\theta_f; x)\right)}{\sum_{j = 1}^{|\mathcal{Y}|}\exp \left(\mathbf{w}_j^\top f(\theta_f; x)\right)},
\end{equation}
where $\mathbf{w}_i$ is the parameters for the $i$-th class. In the classification setting, we minimize the \textit{cross-entropy loss} over each sample $(x, y)$ in the training domain: $\mathcal{L}_\mathsf{CE} (\theta; x, y) \coloneqq -\log p_{(y)}(\theta; x)$. Moreover, in order to preserve the semantics of the input samples, the metric $D_\theta$ is defined in the latent space $\mathcal{Z}$. Let $c_\theta: \mathcal{Z} \times \mathcal{Z} \rightarrow \mathbb{R}_+ \cup \{\infty\}$ denote the transportation cost of moving mass from $(x_0, y_0)$ to $(x, y)$: $c_\theta \left((x_0, y_0), (x, y)\right) \coloneqq {\|z_0 - z\|}_2^2 + \infty \cdot \mathbf{1}\{y_0 \neq y\}$, where $z = f(\theta_f; x)$. For probability measures $P$ and $P_0$ supported on $\mathcal{Z}$, let $\Pi(P, P_0)$ be their couplings. Then, we use the \emph{Wasserstein metric} $D_\theta$ defined by $D_\theta (P, P_0) \coloneqq \inf_{M \in \Pi(P, P_0)} \left\{\mathbb{E}_M \left[ c_\theta\left((X_0, Y_0), (X, Y)\right) \right]\right\}$. The solution to the worst-case problem~\eqref{eq:worst} ensures good performance (robustness) against any data distribution $P$ that is $\rho$ distance away from the source domain $P_0$. However, for deep neural networks, this formulation is intractable with arbitrary $\rho$. Instead, following the reformulation of~\cite{sinha2018certifying,volpi2018generalizing}, we consider its \textit{Lagrangian relaxation} $\mathcal{F}$ for a fixed penalty parameter $\gamma \geq 0$:
\begin{equation}
\label{eq:lagworst}
\minimize_{\theta \in \Theta} \left\{\mathcal{F}(\theta) \coloneqq \sup_P \left\{ \mathbb{E}[\mathcal{L}(\theta; X, Y)] - \gamma D_\theta (P, P_0) \right\} \right\}.
\end{equation}

\section{Methodology}\label{sec:method}

In this paper, our main idea is to incorporate the IB principle into adversarial data augmentation so as to improve model robustness to large domain shifts. We start by adapting the IB Lagrangian~\eqref{eq:ib} to supervised-learning scenarios so that the latent representation $Z$ can be leveraged for classification purposes. To this end, we modify the IB Lagrangian~\eqref{eq:ib} following~\cite{achille2018emergence,achille2018information,amjad2019learning} to $\mathcal{L}_\mathsf{IB}(\theta; X, Y) \coloneqq \mathcal{L}_\mathsf{CE} (\theta; X, Y) + \beta I(X; Z)$,
where the constraint on $I(Y; Z)$ is replaced with the risk associated to the prediction according to the loss function $\mathcal{L}_\mathsf{CE}$. We can see that $\mathcal{L}_\mathsf{IB}$ appears as a standard cross-entropy loss augmented with a regularizer $I(X; Z)$ promoting minimality of the representation. Then, we rewrite Eq.~\eqref{eq:lagworst} to leverage the newly defined $\mathcal{L}_\mathsf{IB}$ loss function:
\begin{equation}
\label{eq:ibworst}
\minimize_{\theta \in \Theta} \left\{\mathcal{F}_\mathsf{IB}(\theta) \coloneqq \sup_P \left\{ \mathbb{E}[\mathcal{L}_\mathsf{IB}(\theta; X, Y)] - \gamma D_\theta (P, P_0) \right\}\right\}.
\end{equation}

As discussed in~\cite{sinha2018certifying,volpi2018generalizing}, the worst-case setting of Eq.~\eqref{eq:ibworst} can be formalized as a \emph{minimax} optimization problem. It is solved by an iterative training procedure where two phases are alternated in $K$  iterations. In the \emph{maximization} phase, new data points are produced by computing the inner maximization problem $\mathcal{F}_\mathsf{IB}$ to mimic fictitious target distributions $P$ that satisfy the constraint $D_\theta(P, P_0) \leq \rho$. In the \emph{minimization} phase, the network parameters are updated by the loss function $\mathcal{L}_\mathsf{IB}$ evaluated on the adversarial examples generated from the maximization phase.

The main challenge in optimizing Eq.~\eqref{eq:ibworst} is that exact computation of the compression term $I(X; Z)$ in $\mathcal{L}_\mathsf{IB}$ is almost impossible due to the high dimensionality of the data. The way of approximating this term in the minimization phase has been widely studied in recent years, and we follow~\cite{elad2019direct,gal2015bayesian} to express $I(X; Z)$ by $\ell_2$ penalty (also known as weight decay~\cite{krogh1992simple}). Below, we discuss how to effectively implement $I(X; Z)$ in the maximization phase for adversarial data augmentation. The full algorithm is summarized in Algorithm~\ref{alg:full}.

\begin{algorithm}[t]
\begin{algorithmic}[1]
\Require Source dataset $\mathcal{D}_0 = \{X_i, Y_i\}_{1\leq i \leq N}$ and initialized network weights $\theta_0$
\Ensure Learned network weights $\theta$
\State Initialize $\theta \leftarrow \theta_0$, $\mathcal{D} \leftarrow \mathcal{D}_0$
\For{$k = 1, \dots, K$} \Comment{Run the minimax procedure $K$ times}
    \For{$t = 1, \dots, T_\mathsf{MIN}$} \Comment{Run the minimization phase $T_\mathsf{MIN}$ times}
        \State Sample $(X_t, Y_t)$ uniformly from dataset $\mathcal{D}$
        \State $\theta \leftarrow \theta - \alpha \nabla_\theta \mathcal{L}_\mathsf{IB}(\theta;X_t, Y_t)$
    \EndFor
    \ForAll{$(X_i, Y_i) \in \mathcal{D}$}
        \State $X_i^k \leftarrow X_i$
        \For{$t = 1, \dots, T_\mathsf{MAX}$} \Comment{Run the maximization phase $T_\mathsf{MAX}$ times}
            \State $X_i^k \leftarrow X_i^k + \eta \nabla_{X_i^k} \left\{\mathcal{L}_\mathsf{CE}(\theta; X_i^k, Y_i) + \beta h(\theta; X_i^k) - \gamma c_\theta((X_i^k, Y_i), (X_i, Y_i))\right\}$
        \EndFor
        \State Append $(X_i^k, Y_i^k)$ to dataset $\mathcal{D}$
    \EndFor
\EndFor
\While{\textit{not reach maximum steps}}
    \State Sample $(X_i, Y_i)$ uniformly from dataset $\mathcal{D}$
    \State $\theta \leftarrow \theta - \alpha \nabla_\theta \mathcal{L}_\mathsf{IB}(\theta;X_i, Y_i)$
\EndWhile
\end{algorithmic}
\caption{Max-Entropy Adversarial Data Augmentation (ME-ADA)}
\label{alg:full}
\end{algorithm}

\subsection{Regularizing Maximization Phase via Maximum Entropy}

Intuitively, regularizing the mutual information $I(X; Z)$ in the maximization phase encourages adversarial perturbations that cannot be effectively ``compressed'' by the current model. From the information theory perspective, these perturbations usually imply large domain shifts, and thus can potentially benefit model generalization and robustness. However, since $Z$ is high dimensional, maximizing $I(X; Z)$ is intractable. One of our key results is that, when we restrict to classification scenarios, we can efficiently approximate and maximize $I(X; Z)$ during adversarial data augmentation. As we will show, this process can be effectively implemented through maximizing the entropy of network predictions, which is a tractable lower bound of $I(X; Z)$.

To set the stage, we let $\hat{Y} \in \mathcal{Y}$ denote the predicted class label given the input $X$. As described in~\cite{amjad2019learning,tishby1999information}, deep neural networks can be considered as a Markov chain of successive representations of the input where information flows obeying the structure: $X \rightarrow Z \rightarrow \hat{Y}$. By the \emph{Data Processing Inequality}~\cite{cover2012elements}, we have $I(X; Z) \geq I(X; \hat{Y})$. On the other hand, when performing data augmentation during each maximization phase, the model parameters $\theta$ are fixed; and $\hat{Y}$ is a \emph{deterministic} function of $X$, i.e., any given input is mapped to a single class. Consequently, it holds that $H(\hat{Y}|X) = 0$, where $H(\cdot)$ is the \emph{Shannon entropy}. After combining all these together, then we have Proposition~\ref{prop:epy}.

\begin{proposition}
\label{prop:epy}
Consider a deterministic neural network, the parameters $\theta$ of which are fixed. Given the input $X$, let $\hat{Y}$ be the network prediction and $Z$ be the latent representation of $X$. Then, the mutual information $I(X; Z)$ is lower bounded by $H(\hat{Y})$, i.e., we have that,
\begin{equation}
\label{eq:epy}
I(X; Z) \geq I(X; \hat{Y}) = H(\hat{Y}) - H(\hat{Y}|X) = H(\hat{Y}).
\end{equation}
\end{proposition}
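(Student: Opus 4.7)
The proof is essentially outlined in the paragraph preceding the statement, so the plan is just to assemble the three standard information-theoretic identities in the right order and justify the hypotheses under which each holds.

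First I would verify the Markov chain $X \to Z \to \hat{Y}$ that is invoked in the text. With $\theta$ held fixed during the maximization phase, the latent representation is $Z = f(\theta_f; X)$, and the predicted label $\hat{Y} = \arg\max_i \mathbf{w}_i^\top Z$ depends on $X$ only through $Z$. This means $p(\hat{Y} \mid Z, X) = p(\hat{Y} \mid Z)$, which is exactly the Markov property needed in the next step. With the chain in place, the Data Processing Inequality immediately yields $I(X; Z) \geq I(X; \hat{Y})$, giving the first inequality in Eq.~\eqref{eq:epy}.

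Next I would expand the mutual information using the standard identity $I(X; \hat{Y}) = H(\hat{Y}) - H(\hat{Y} \mid X)$. To dispose of the conditional entropy term, I would argue that under the deterministic-network assumption, $\hat{Y}$ is a fixed function $g(X) := \arg\max_i \mathbf{w}_i^\top f(\theta_f; X)$ of $X$. Therefore $p(\hat{Y} = y \mid X = x) \in \{0,1\}$ for every $(x,y)$, so each term $-p(y\mid x)\log p(y\mid x)$ in the sum defining $H(\hat{Y}\mid X)$ vanishes, giving $H(\hat{Y} \mid X) = 0$. Chaining the inequality with this equality produces $I(X; Z) \geq H(\hat{Y})$, which is the claim.

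There is essentially no obstacle here; the only mild subtlety is making explicit that the softmax-then-argmax map is a well-defined function of $Z$ (so that both the Markov property and the determinism of $\hat{Y}\mid X$ are genuine rather than informal), and flagging that the $H(\hat{Y}\mid X)=0$ step is precisely what will fail once stochasticity (e.g.\ Dropout or a stochastic encoder) is introduced, which is why the paper separately promises an ``approximate'' version of the bound in the non-deterministic regime. I would keep the proof to a three-line display mirroring Eq.~\eqref{eq:epy}, with one sentence justifying the Markov chain, one citing the DPI, and one invoking determinism for $H(\hat{Y}\mid X)=0$.
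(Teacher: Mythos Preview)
Your proposal is correct and follows essentially the same approach as the paper: establish the Markov chain $X \to Z \to \hat{Y}$, apply the Data Processing Inequality to get $I(X;Z) \geq I(X;\hat{Y})$, expand $I(X;\hat{Y}) = H(\hat{Y}) - H(\hat{Y}\mid X)$, and use determinism of $\hat{Y}$ given $X$ to conclude $H(\hat{Y}\mid X)=0$. Your added care in making the Markov property and the argmax determinism explicit is a welcome expansion, but the skeleton is identical to the argument sketched in the paragraph preceding the proposition.
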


Note that Eq.~\eqref{eq:epy} does not mean that calculating $H(\hat{Y})$ does not need $X$, since $\hat{Y}$ is generated by inputting $X$ into the network. There are two important benefits of our formulation~\eqref{eq:epy} to be discussed. First, it provides a method to maximize $I(X; Z)$ that is not related to the input dimensionality. Thus, for high dimensional images, we can still maximize mutual information this way. Second, our formulation is closely related to the \emph{Deterministic Information Bottleneck}~\cite{strouse2017deterministic}, where $I(X; Z)$ is approximated by $H(Z)$. However, $H(Z)$ is still intractable in general. Instead, $H(\hat{Y})$ can be directly computed from the softmax output of a classification network as we will show later. Next, we modify Eq.~\eqref{eq:ibworst} by replacing $I(X; Z)$ with $H(\hat{Y})$, which becomes a relaxed worst-case problem:
\begin{equation}
\label{eq:dibworst}
\mathcal{F}_\mathsf{DIB}(\theta) \coloneqq \sup_P \left\{ \mathbb{E}[\mathcal{L}_\mathsf{CE} (\theta; X, Y) + \beta H(\hat{Y})] - \gamma D_\theta (P, P_0) \right\}.
\end{equation}
From a Bayesian perspective, the prediction entropy $H(\hat{Y})$ can be viewed as the predictive uncertainty~\cite{kendall2017uncertainties,snoek2019can} of a model. Therefore, our maximum-entropy formulation~\eqref{eq:dibworst} is equivalent to perturbing the underlying data distribution so that the predictive uncertainty of the current model is enlarged in the maximization phase. This motivates us to extend our approach to \emph{stochastic} neural networks for better capturing the model uncertainty as we will show in the experiment.

\p{Empirical Estimation} Now, $\mathcal{F}_\mathsf{DIB}$ involves the expected prediction entropy $H(\hat{Y})$ over the data distribution. However, during training we only have sample access to the data distribution, which we can use as a surrogate for empirical estimation. Given an observed input $x$ sampled from the source distribution $P_0$, we start from defining the prediction entropy of its corresponding output by:
\begin{equation}
h(\theta; x) \coloneqq -\sum_{i = 1}^{|\mathcal{Y}|} p_{(i)}(\theta; x) \log p_{(i)}(\theta; x).
\end{equation}
Then, through calculating the expectation over the prediction entropies of all possible observations $\{x_i\}_{1\leq i \leq N}$ contained in the source dataset $P_0$, we can obtain the empirical estimation of $H(\hat{Y})$:
\begin{equation}
\label{eq:epyemp}
\hat{H}(\hat{Y}) \coloneqq -\sum_{\mathcal{Y}} \left( \sum_i p_{\hat{Y}|X}(\cdot|x_i)\hat{p}(x_i) \right) \log \left( \sum_i p_{\hat{Y}|X}(\cdot|x_i)\hat{p}(x_i) \right) \approxeq \frac{1}{N}\sum_{i = 1}^{N} h(\theta; x_i),
\end{equation}
where $\hat{H}(\cdot)$ denotes the empirical entropy, $\hat{p}$ is the empirical distribution of $x_i$, and the approximation is achieved by Jensen's inequality. After combing Eq.~\eqref{eq:epyemp} with the relaxed worst-case problem~\eqref{eq:dibworst}, we will have the empirical counterpart of $\mathcal{F}_\mathsf{DIB}$ which is defined by $\hat{\mathcal{F}}_\mathsf{DIB}(\theta) \coloneqq \sup_P \left\{ \mathbb{E}_{x \sim P}[\mathcal{L}_\mathsf{CE}(\theta; x, y) + \beta h(\theta; x)] - \gamma D_\theta (P, P_0) \right\}$. Taking the dual reformulation of the penalty problem $\hat{\mathcal{F}}_\mathsf{DIB}$, we can obtain an efficient solution procedure. The following result is a minor adaptation of~\cite{volpi2018generalizing}~(Lemma~1):

\begin{proposition}
Let $\mathcal{L}: \Theta \times (\mathcal{X} \times \mathcal{Y}) \rightarrow \mathbb{R}$ and $h: \Theta \times \mathcal{X} \rightarrow \mathbb{R}$ be continuous. Let $\phi_\gamma$ denote the robust surrogate loss. Then, for any distribution $P_0$ and any $\gamma \geq 0$, we have that,
\begin{gather}
\sup_P \left\{ \mathbb{E}_{x \sim P}[\mathcal{L}(\theta; x, y) + \beta h(\theta; x)] - \gamma D_\theta (P, P_0) \right\} = \mathbb{E}_{x \sim P_0}\left[ \phi_\gamma(\theta; x, y) \right], \\
\label{eq:sgloss}
\textrm{where} \; \phi_\gamma(\theta; x_0, y_0) \coloneqq \sup_{x \in \mathcal{X}} \left\{ \mathcal{L}(\theta; x, y_0) + \beta h(\theta; x) - \gamma c_\theta((x, y_0), (x_0, y_0)) \right\}.
\end{gather}
\end{proposition}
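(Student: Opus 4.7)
The plan is to reduce this statement directly to Lemma~1 of~\cite{volpi2018generalizing} (equivalently Proposition~1 of~\cite{sinha2018certifying}) by absorbing the entropy regularizer into the loss. Define the augmented loss $\tilde{\mathcal{L}}(\theta; x, y) \coloneqq \mathcal{L}(\theta; x, y) + \beta h(\theta; x)$; since $\mathcal{L}$ and $h$ are continuous and $\beta$ is a fixed scalar, $\tilde{\mathcal{L}}$ is continuous on $\Theta \times (\mathcal{X} \times \mathcal{Y})$. The left-hand side then reads $\sup_P \{\mathbb{E}_{(X,Y) \sim P}[\tilde{\mathcal{L}}(\theta; X, Y)] - \gamma D_\theta(P, P_0)\}$, which matches the hypothesis of the cited duality lemma verbatim. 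Applying it yields $\mathbb{E}_{(X_0,Y_0) \sim P_0}[\tilde{\phi}_\gamma(\theta; X_0, Y_0)]$ with $\tilde{\phi}_\gamma(\theta; x_0, y_0) = \sup_{x \in \mathcal{X}} \{\mathcal{L}(\theta; x, y_0) + \beta h(\theta; x) - \gamma c_\theta((x, y_0), (x_0, y_0))\}$, which is exactly $\phi_\gamma$ as defined in~\eqref{eq:sgloss}. Note that the label in the inner supremum is automatically pinned to $y_0$ because $c_\theta((x,y),(x_0,y_0)) = \infty$ whenever $y \neq y_0$.

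To make the adaptation transparent, I would sketch the mechanics of the underlying argument. First, unfold $D_\theta(P, P_0) = \inf_{M \in \Pi(P, P_0)} \mathbb{E}_M[c_\theta]$ and merge the outer $\sup_P$ with this $\inf_M$ into a single supremum over joint measures $M$ on $(\mathcal{X} \times \mathcal{Y})^2$ whose first marginal equals $P_0$; every such $M$ induces a valid $P$ as its second marginal, so the coupling constraint $M \in \Pi(P, P_0)$ is absorbed into the optimization variable. The merged expression reads $\sup_{M:\,M_1 = P_0} \mathbb{E}_M[\tilde{\mathcal{L}}(\theta; X, Y) - \gamma c_\theta((X_0,Y_0),(X,Y))]$, where $M_1$ denotes the first marginal. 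Next, disintegrate $M$ against $P_0$ as a Markov kernel and apply the interchange-of-supremum-and-integral principle: under continuity of $\tilde{\mathcal{L}}$ and measurability of $c_\theta$, a measurable selection of pointwise near-optimizers exists, so the supremum pushes inside the integral, yielding $\mathbb{E}_{(X_0,Y_0) \sim P_0}[\sup_x \{\tilde{\mathcal{L}}(\theta; x, Y_0) - \gamma c_\theta((x,Y_0),(X_0,Y_0))\}]$.

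The main technical obstacle, if one were proving this from scratch, is the measurable interchange of supremum and integral, which requires a normal-integrand / measurable-selection argument to construct a kernel that is nearly pointwise-optimal. Because this is already the content of the cited lemma and that selection step only uses continuity of the objective in $x$, the only thing to verify in the present adaptation is that adding $\beta h(\theta; x)$ preserves the continuity hypothesis. This is immediate from continuity of $h$, so the result follows with no additional work beyond invoking~\cite{volpi2018generalizing}.
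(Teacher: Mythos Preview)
Your proposal is correct and matches the paper's treatment: the paper does not give a separate proof but simply states the result as ``a minor adaptation of~\cite{volpi2018generalizing}~(Lemma~1),'' which is precisely your reduction via the augmented loss $\tilde{\mathcal{L}} = \mathcal{L} + \beta h$. Your additional sketch of the underlying duality mechanics (merging the sup over $P$ with the coupling infimum and interchanging supremum with integral via measurable selection) goes beyond what the paper records, but is consistent with the argument in the cited references.
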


To solve the penalty problem of Eq.~\eqref{eq:ibworst}, in the minimization phase of the iterative training procedure, we can perform \emph{Stochastic Gradient Descent} (SGD) on the robust surrogate loss $\phi_\gamma$. To be specific, under suitable conditions~\cite{boyd2004convex}, we have that $\nabla_\theta \phi_\gamma(\theta; x_0, y_0) = \nabla_\theta \mathcal{L}_\mathsf{IB}(\theta;x^\star_\gamma, y_0)$, where $x^\star_\gamma = \argmax_{x \in \mathcal{X}} \left\{ \mathcal{L}_\mathsf{CE}(\theta; x, y_0) + \beta h(\theta; x) - \gamma c_\theta((x, y_0), (x_0, y_0)) \right\}$ is an adversarial perturbation of $x_0$ at the current model $\theta$. On the other hand, in the maximization phase, we solve the maximization problem~\eqref{eq:sgloss} by \emph{Maximum-Entropy Adversarial Data Augmentation}~(ME-ADA) in this work. Concretely, in the $k$-th \emph{maximization} phase, we compute $N$ adversarially perturbed samples at the current model $\theta$:
\begin{equation}
\label{eq:xmax}
X_i^k \in \argmax_{x \in \mathcal{X}} \left\{ \mathcal{L}_\mathsf{CE}(\theta; x, Y_i) + \beta h(\theta; x) - \gamma c_\theta((x, Y_i), (X_i^{k-1}, Y_i)) \right\}.
\end{equation}
Note that the entropy term $h(\theta; x)$ is efficient to be calculated from the softmax output of a model, which can be implemented with one line of code in modern deep learning frameworks, and substantial performance improvement can be achieved by it as we will show in the experiments.

\p{Theoretic Bound} It is essential to guarantee that the empirical estimate of the entropy $\hat{H}$ (from a training set $\mathcal{S}$ containing $N$ samples) is an accurate estimate of the true expected entropy $H$. The next proposition ensures that for large $N$, in a classification problem, the sample estimate of average entropy is close to the expected entropy.

\begin{proposition}
\label{prop:epybnd}
Let $Y$ be a fixed probabilistic function of $X$ into an arbitrary finite target space $\mathcal{Y}$, determined by a fixed and known conditional probability distribution $p_{Y|X}$, and $\mathcal{S}$ be a sample set of size $N$ drawn from the joint probability distribution $p_{XY}$. For any $\delta \in (0, 1)$, with probability of at least $1 - \delta$ over the sample set $\mathcal{S}$, we have,
\begin{equation}
|H(Y) - \hat{H}(Y)| \leq \frac{|\mathcal{Y}|\log(N)\sqrt{\log(2/\delta)}}{\sqrt{2N}} + \frac{|\mathcal{Y}| - 1}{N}.
\end{equation}
\end{proposition}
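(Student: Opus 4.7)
The plan is to split the estimation error by the triangle inequality
\begin{equation*}
|H(Y) - \hat{H}(Y)| \leq \left| H(Y) - \mathbb{E}[\hat{H}(Y)] \right| + \left| \mathbb{E}[\hat{H}(Y)] - \hat{H}(Y) \right|,
\end{equation*}
and bound the two terms separately: the first (the bias of the plug-in entropy estimator) will yield the $(|\mathcal{Y}|-1)/N$ contribution, while the second (deviation from the mean) will yield the concentration term $|\mathcal{Y}|\log(N)\sqrt{\log(2/\delta)}/\sqrt{2N}$.

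For the bias, I would exploit that $f(t) \coloneqq -t\log t$ is concave on $[0,1]$. Since $\mathbb{E}[\hat{p}(y)] = p_Y(y)$ by unbiasedness of the empirical frequency, Jensen's inequality applied coordinate-wise gives $\mathbb{E}[\hat{H}(Y)] \leq H(Y)$. A second-order Taylor expansion of $f$ around $p_Y(y)$, combined with $\mathrm{Var}(\hat{p}(y)) = p_Y(y)(1-p_Y(y))/N$ and $|f''(t)| = 1/t$, supplies a matching two-sided bound of the form $|H(Y) - \mathbb{E}[\hat{H}(Y)]| \leq (|\mathcal{Y}| - 1)/N$ after summation over the alphabet; this is the standard plug-in bias estimate.

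For the concentration, I would apply McDiarmid's bounded-differences inequality to $\hat{H}$ viewed as a function of the $N$ i.i.d.\ samples. Altering one sample shifts two coordinates of $\hat{p}$ by $1/N$ each; since $f'(t) = -\log t - 1$ is bounded in absolute value by $\log N + 1$ on $[1/N, 1]$ (the regime of any observed symbol, while unobserved symbols contribute $f(0) = 0$), each affected coordinate of $\hat{H}$ changes by at most $(\log N)/N$. A coarse coordinate-wise bound over all $|\mathcal{Y}|$ letters then yields the per-sample sensitivity $c = |\mathcal{Y}|\log(N)/N$, and McDiarmid gives
\begin{equation*}
\Pr\left[ \left| \hat{H}(Y) - \mathbb{E}\hat{H}(Y) \right| \geq \epsilon \right] \leq 2\exp\!\left(-\frac{2\epsilon^2}{N c^2}\right).
\end{equation*}
Inverting this at confidence level $\delta$ produces the stated concentration term, and adding the bias estimate closes the proof.

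The main obstacle is that $f$ has an unbounded derivative as $t \to 0^+$, so a naive Lipschitz argument for $\hat{H}$ fails; the argument only goes through because any symbol that appears in $\mathcal{S}$ has empirical probability at least $1/N$, truncating the effective Lipschitz constant at $\log N$, while symbols absent from $\mathcal{S}$ contribute nothing to $\hat{H}$. A secondary care point is combining the one-sided Jensen bound with the Taylor remainder in the correct direction to obtain a genuinely two-sided bias estimate rather than only $\mathbb{E}\hat{H} \leq H$.
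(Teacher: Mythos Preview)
Your decomposition and tools match the paper's exactly: triangle inequality into bias plus deviation, McDiarmid with per-sample sensitivity $|\mathcal{Y}|\log(N)/N$ for the deviation term, and the standard plug-in bias bound $(|\mathcal{Y}|-1)/N$ for the first term. Two small divergences worth noting: (i) the paper invokes Paninski's bias bound $|\mathbb{E}[H-\hat{H}]|\leq\log\bigl(1+(|\mathcal{Y}|-1)/N\bigr)\leq(|\mathcal{Y}|-1)/N$ as a black box rather than via your Taylor sketch, which as you yourself flag would need extra care to be made rigorously two-sided with the exact constant; and (ii) in the paper's setup the estimator is $\hat{p}(y)=\sum_x p(y|x)\,\hat{p}(x)$ with \emph{known} $p_{Y|X}$, so replacing one $X$-sample perturbs \emph{every} coordinate of $\hat{p}_Y$ by at most $1/N$---not just two---which is exactly why the sensitivity carries the full $|\mathcal{Y}|$ factor you also end up using via your coarse coordinate-wise bound.
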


We prove Proposition~\ref{prop:epybnd} in the supplementary material. The proof adapts the setting in~\cite{shamir2010learning}, where we bound the deviations of the information estimations from their expectation and then use the bound on the expected bias of entropy estimation. Here, it is also worth discussing two important properties of this bound. First, we note that Proposition~\ref{prop:epybnd} holds for any fixed probabilistic function. Compared with prior studies on the plug-in estimate of discrete entropy over a finite size alphabet~\cite{valiant2011estimating,wu2016minimax}, we focus on the bound of non-optimal estimators. In particular, this proposition holds for any $\hat{Y}$, even if $\hat{Y}$ is not a globally optimal solution for $\mathcal{F}_\mathsf{DIB}$ in Eq.~\eqref{eq:dibworst}. This is the case of models in the maximization phase, which thus ensures the effectiveness of our formulation across the whole iterative training procedure. Second, the bound does not depend on $|\mathcal{X}|$. In addition, the complexity of the bound is mainly controlled by $|\mathcal{Y}|$. By constraining $|\mathcal{Y}|$ to be small, a tight bound can be achieved. This assumption usually holds for the setting of training classification models, i.e., $|\mathcal{Y}| \ll N$.

\subsection{Maximum Entropy in Non-Deterministic Conditions}

It is important to note that not all models are deterministic, e.g., when deep neural networks are stochastic~\cite{florensa2017stochastic,tang2013learning} or contain Dropout layers~\cite{gal2016dropout,srivastava2014dropout}. The mapping from $X$ to $\hat{Y}$ may be intrinsically noisy or non-deterministic. Here, we show that when $\hat{Y}$ is a small perturbation away from being a deterministic function of $X$, our maximum-entropy formulation~\eqref{eq:dibworst} still applies in an approximate sense. We now consider the case when the joint distribution of $X$ and $\hat{Y}$ is $\epsilon$-close to having $\hat{Y}$ be a deterministic function of $X$. The next result is a minor adaptation of~\cite{kolchinsky2019caveats} (Theorem~1) and it shows that the conditional entropy $H(\hat{Y}|X)$ is $\mathcal{O}(-\epsilon \log \epsilon)$ away from being zero.

\begin{corollary}
\label{cor:nondet}
Let $X$ be a random variable and $Y$ be a random variable with a finite set of outcomes $\mathcal{Y}$. Let $p_{XY}$ be a joint distribution over $X$ and $Y$ under which $Y = f(X)$. Let $\tilde{p}_{XY}$ be a joint distribution over $X$ and $Y$ which has the same marginal over $X$ as $p_{XY}$, i.e., $\tilde{p}_X = p_X$, and obey ${|\tilde{p}_{XY} - p_{XY}|}_1 \leq \epsilon \leq \frac{1}{2}$. Then, we have that,
\begin{equation}
\left| H(\tilde{p}(Y|X)) \right| \leq -\epsilon \log \frac{\epsilon}{{|\mathcal{Y}|}^3}.
\end{equation}
\end{corollary}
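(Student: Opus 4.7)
The plan is to reduce the corollary to a Fano-style continuity bound for entropy, applied conditionally in $X$, and then average using concavity. Under $p_{XY}$ the conditional $p(Y\mid X=x)$ is a point mass at $f(x)$, so $H(p(Y\mid X))=0$; the goal is therefore to show that $\tilde p(Y\mid X=x)$ stays close to a point mass on average, and that small deviations give small entropy.

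First I would set $\alpha_x := 1-\tilde p(f(x)\mid x)$. A direct computation against the delta at $f(x)$ gives the identity
\begin{equation*}
\bigl\|\tilde p(Y\mid x)-p(Y\mid x)\bigr\|_1 \;=\; (1-\tilde p(f(x)\mid x)) + \sum_{y\neq f(x)} \tilde p(y\mid x) \;=\; 2\alpha_x.
\end{equation*}
Because $\tilde p_X=p_X$, the joint $\ell_1$ distance factors as $\|\tilde p_{XY}-p_{XY}\|_1=\sum_x p(x)\,\|\tilde p(Y\mid x)-p(Y\mid x)\|_1=2\sum_x p(x)\alpha_x$. The hypothesis therefore gives the budget $\bar\alpha := \mathbb{E}_{X\sim p}[\alpha_X]\leq \epsilon/2$.

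Next I would bound the pointwise conditional entropy. For each $x$, $\tilde p(Y\mid x)$ puts mass $1-\alpha_x$ on $f(x)$ and mass $\alpha_x$ distributed over at most $|\mathcal{Y}|-1$ remaining labels; the entropy is maximized when this mass is spread uniformly, yielding the Fano-type inequality
\begin{equation*}
H(\tilde p(Y\mid x)) \;\leq\; h_2(\alpha_x) + \alpha_x\log(|\mathcal{Y}|-1),
\end{equation*}
where $h_2$ is the binary entropy. Averaging over $X\sim p$ and invoking Jensen's inequality on the concave map $h_2$ (the second term is already linear in $\alpha_x$) gives
\begin{equation*}
H(\tilde p(Y\mid X)) \;=\; \mathbb{E}_{X\sim p}\bigl[H(\tilde p(Y\mid X))\bigr] \;\leq\; h_2(\bar\alpha) + \bar\alpha\log(|\mathcal{Y}|-1).
\end{equation*}

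Finally I would convert this to the stated form using $\bar\alpha\leq\epsilon/2\leq 1/4$. On that range $-t\log t$ is monotone increasing, so $h_2(\bar\alpha)\leq h_2(\epsilon/2) \leq -\epsilon\log(\epsilon/2) + (\epsilon/2)$, and similarly $\bar\alpha\log(|\mathcal{Y}|-1)\leq (\epsilon/2)\log|\mathcal{Y}|$. Bundling these crude bounds and absorbing the linear $\epsilon$ term into an extra factor of $|\mathcal{Y}|$ inside the logarithm produces $-\epsilon\log(\epsilon/|\mathcal{Y}|^3)$; the cube on $|\mathcal{Y}|$ is precisely the slack this bookkeeping needs.

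The main obstacle is the last bookkeeping step rather than any genuine inequality: juggling $h_2(\bar\alpha)$, the $\alpha_x\log(|\mathcal{Y}|-1)$ term, and the linear residuals into the compact form $-\epsilon\log(\epsilon/|\mathcal{Y}|^3)$ while respecting the sign and the constraint $\epsilon\leq 1/2$. The monotonicity argument above requires $\epsilon/2$ to lie below the mode $1/e$ of $-t\log t$ to push $h_2(\bar\alpha)\leq h_2(\epsilon/2)$, which is guaranteed by the hypothesis $\epsilon\leq 1/2$ but should be checked explicitly. Everything else — the identity for $\|\tilde p(Y\mid x)-p(Y\mid x)\|_1$, the Fano bound, and Jensen's inequality — is routine.
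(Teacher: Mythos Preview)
Your argument is correct and self-contained, but it takes a genuinely different route from the paper. The paper does not redo the analysis: it simply invokes Theorem~1 of Kolchinsky et al., which bounds $\lvert H(p(Y\mid X))-H(\tilde p(Y\mid X))\rvert\le -\epsilon\log(\epsilon/|\mathcal Y|^3)$ for \emph{any} two joints with matching $X$-marginal and $\ell_1$ distance at most $\epsilon$, and then specializes to the deterministic case where $H(p(Y\mid X))=0$. Your approach instead exploits the deterministic structure directly: each conditional $p(Y\mid x)$ is a point mass, so Fano's inequality applies pointwise, and Jensen on the concave binary entropy aggregates over $x$. This buys you an elementary proof that does not black-box the continuity lemma, at the cost of only handling the deterministic reference distribution (which is all the corollary needs). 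One small remark: the step $h_2(\bar\alpha)\le h_2(\epsilon/2)$ follows from monotonicity of $h_2$ on $[0,1/2]$, which holds for all $\bar\alpha\le\epsilon/2\le 1/4$; you do not actually need $\epsilon/2\le 1/e$ as you suggest in your obstacle paragraph, since it is $h_2$ rather than $-t\log t$ whose monotonicity you are using there.
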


As we show in this corollary, even if the relationship between $X$ and $\hat{Y}$ is not perfectly deterministic but close to being so, i.e., it is $\epsilon$-close to a deterministic function, then we have $H(\hat{Y}|X) \approx 0$. Hence, in this case, the proposed Proposition~\ref{prop:epy} and our maximum-entropy adversarial data augmentation formulation~\eqref{eq:dibworst} still hold in an approximate sense.

\section{Experiments}\label{sec:experiments}

In this section, we evaluate our approach over a variety of settings. We first test with MNIST under the setting of large domain shifts, and then test on a more challenging dataset, with PACS data under the domain generalization setting. Further, we test on CIFAR-10-C and CIFAR-100-C which are standard benchmarks for evaluating model robustness to common corruptions. We compare the proposed \emph{Maximum-Entropy Adversarial Data Augmentation}~(ME-ADA) with previous state of the art when available. We note that \emph{Adversarial Data Augmentation}~(ADA)~\cite{volpi2018generalizing} is our main competitor, since our method downgrades to \cite{volpi2018generalizing} when the maximum-entropy term is discarded.

\p{Datasets} MNIST dataset~\cite{lecun1998gradient} consists of handwritten digits with 60,000 training examples and 10,000 testing examples. Other digit datasets, including SVHN~\cite{netzer2011reading}, MNIST-M~\cite{ganin2015unsupervised}, SYN~\cite{ganin2015unsupervised} and USPS~\cite{denker1989neural}, are leveraged for evaluating model performance. These four datasets contain large domain shifts from MNIST in terms of backgrounds, shapes and textures. PACS~\cite{li2017deeper} is a recent dataset with different object style depictions and a more challenging domain shift than the MNIST experiment. This dataset contains four domains (art, cartoon, photo and sketch), and shares seven common object categories (dog, elephant, giraffe, guitar, house, horse and person) across these domains. It is made up of 9,991 images with the resolution of $224 \times 224$. For fair comparison, we follow the protocol in \cite{li2017deeper} including the recommended train, validation and test split.

CIFAR-10 and CIFAR-100 are two datasets~\cite{krizhevsky2009learning} containing small $32 \times 32$ natural RGB images, both with 50,000 training images and 10,000 testing images. CIFAR-10 has 10 categories, and CIFAR-100 has 100 object classes. In order to measure the resilience of a model to common corruptions, we evaluate on CIFAR-10-C and CIFAR-100-C datasets~\cite{hendrycks2019benchmarking}. These two datasets are constructed by corrupting the original CIFAR test sets. For each dataset, there are a total of fifteen noise, including blur, weather, and digital corruption types, and each of them appears at five severity levels or intensities. We do not tune on the validation corruptions, so we report the average performance over all corruptions and intensities.

\subsection{MNIST with Domain Shifts}

\p{Experiment Setup} We follow the setup of~\cite{volpi2018generalizing} in experimenting with MNIST dataset. We use 10,000 samples from MNIST for training and evaluate prediction accuracy on the respective test sets of SVHN, MNIST-M, SYN and USPS. In order to work with comparable datasets, we resize all the images to $32 \times 32$, and treat images from MNIST and USPS as RGB images. We use LeNet~\cite{lecun1989backpropagation} as a base model and the batch size is 32. We use Adam~\cite{kingma2014adam} with $\alpha = 0.0001$ for minimization and SGD with $\eta = 1.0$ for maximization. We set $T_\mathsf{MIN} = 100$, $T_\mathsf{MAX} = 15$, $\gamma = 1.0$, $\beta = 10.0$ and $K = 3$. We compare our method against ERM~\cite{vapnik1998statistical}, ADA~\cite{volpi2018generalizing}, and PAR~\cite{wang2019robust}.

We also implement a variant of our method through \emph{Bayesian Neural Networks} (BNNs)~\cite{blundell2015weight,gal2016dropout,lakshminarayanan2017simple} to demonstrate our compatibility with stochastic neural networks. BNNs learn a distribution over network parameters and are currently the state of the art for estimating predictive uncertainty~\cite{ebrahimi2020uncertainty,neal2012bayesian}. We follow \cite{blundell2015weight} to implement the BNN via variational inference. During the training procedure, in each maximization phase, a set of network parameters are drawn from the variational posterior $q(\theta|\cdot)$, and then the predictive uncertainty is redefined by the expectation of all prediction entropies: $\bar{h}(q; x) = \mathbb{E}_q[h(\theta; x)]$. We refer to the supplementary material for more details of this BNN variant.

\p{Results} Table~\ref{tbl:mnist} shows the classification accuracy and standard deviation of each model averaged over ten runs. We can see that our model with the maximum-entropy formulation achieves the best performance, while the improvement on USPS
is not as significant as those on other domains due to its high similarity with MNIST. We then notice that, after engaging the BNN, our performance is further improved. Intuitively, we believe this is because the BNN provides a better estimation of the predictive uncertainty in the maximization phase. We are also interested in analyzing the behavior of our method when $K$ is increased. Figure~\ref{fig:mnist} shows the results of our method and other baselines by varying the number of iterations $K$ while fixing $\gamma$ and $\beta$. We observe that our method improves performances on SVHN, MNIST-M and SYN, outperforming both ERM and \cite{volpi2018generalizing} statistically significantly in different iterations. This demonstrates that the improvements obtained by our method are consistent.

\begin{table}[t]
\caption{Average classification accuracy (\%) and standard deviation of models trained on MNIST~\cite{lecun1998gradient} and evaluated on SVHN~\cite{netzer2011reading}, MNIST-M~\cite{ganin2015unsupervised}, SYN~\cite{ganin2015unsupervised} and USPS~\cite{denker1989neural}. The results are averaged over ten runs. Best performances are highlighted in bold. The results of PAR are obtained from~\cite{xu2020robust}.}\label{tbl:mnist}
\centering
\resizebox{\linewidth}{!}{
\begin{tabular}{@{}l|cccc|c@{}}
\toprule
& SVHN~\cite{netzer2011reading} & MNIST-M~\cite{ganin2015unsupervised} & SYN~\cite{ganin2015unsupervised} & USPS~\cite{denker1989neural} & Average\\
\midrule
Standard (ERM~\cite{vapnik1998statistical}) & 31.95 $\pm$ 1.91 & 55.96 $\pm$ 1.39 & 43.85 $\pm$ 1.27 & 79.92 $\pm$ 0.98 & 52.92 $\pm$ 0.98\\
PAR~\cite{wang2019robust} & 36.08 $\pm$ 1.27 & 61.16 $\pm$ 0.21 & 45.48 $\pm$ 0.35 & 79.95 $\pm$ 1.18 & 55.67 $\pm$ 0.33\\
\midrule
Adv.~Augment (ADA)~\cite{volpi2018generalizing} & 35.70 $\pm$ 2.00 & 58.65 $\pm$ 1.72 & 47.18 $\pm$ 0.61 & 80.40 $\pm$ 1.70 & 55.48 $\pm$ 0.74\\
+ \emph{Max Entropy} (ME-ADA) & 42.00 $\pm$ 1.74 & \textbf{63.98} $\pm$ 1.82 & 49.80 $\pm$ 1.74 & 79.10 $\pm$ 1.03 & 58.72 $\pm$ 1.12\\
+ \emph{Max Entropy} w/ BNN & \textbf{42.56} $\pm$ 1.45 & 63.27 $\pm$ 2.09 & \textbf{50.39} $\pm$ 1.29 & \textbf{81.04} $\pm$ 0.98 & \textbf{59.32} $\pm$ 0.82\\
\bottomrule
\end{tabular}}
\end{table}

\begin{figure}[t]
  \centering
  \includegraphics[width=.245\textwidth]{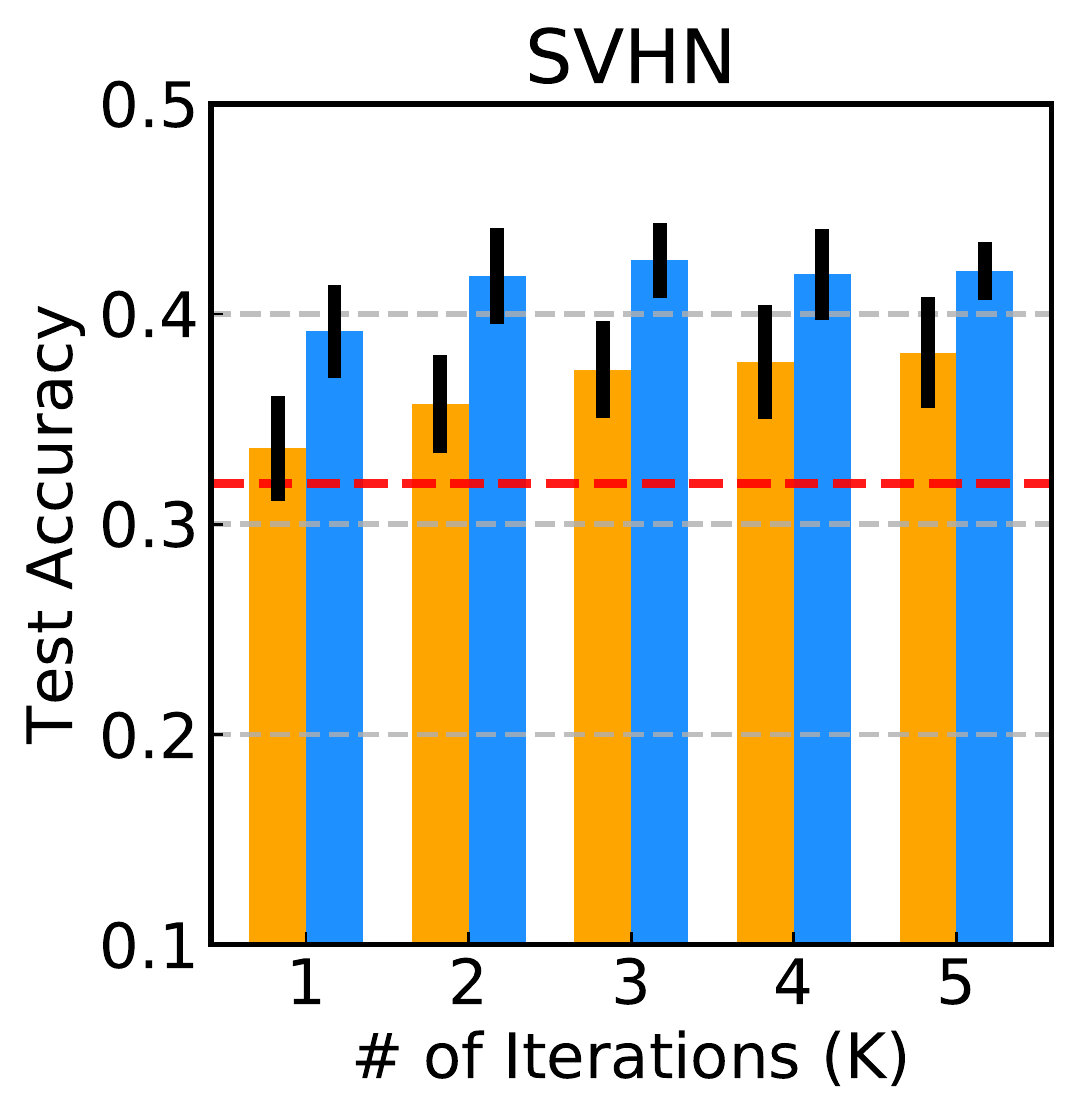}
  \includegraphics[width=.245\textwidth]{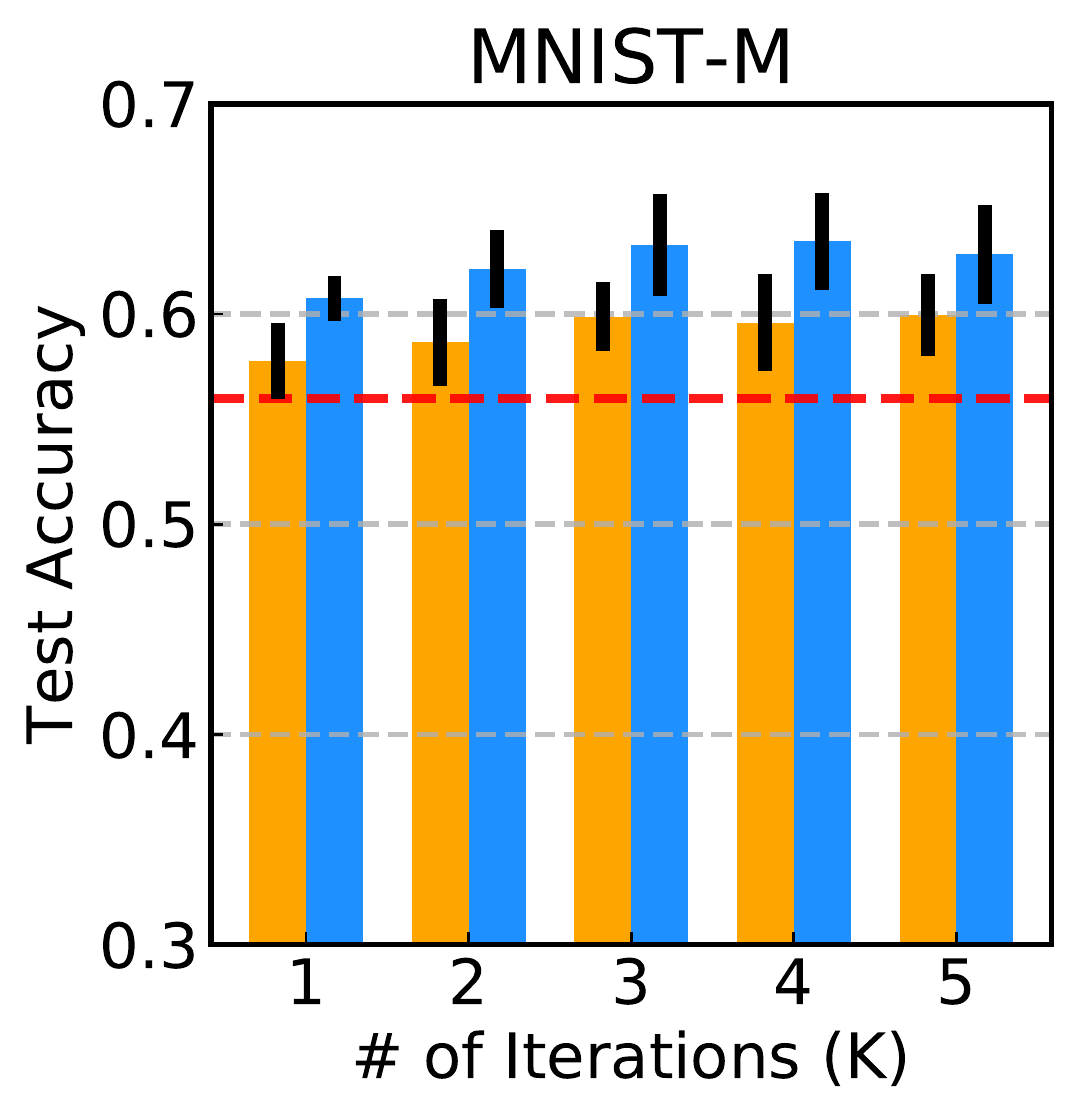}
  \includegraphics[width=.245\textwidth]{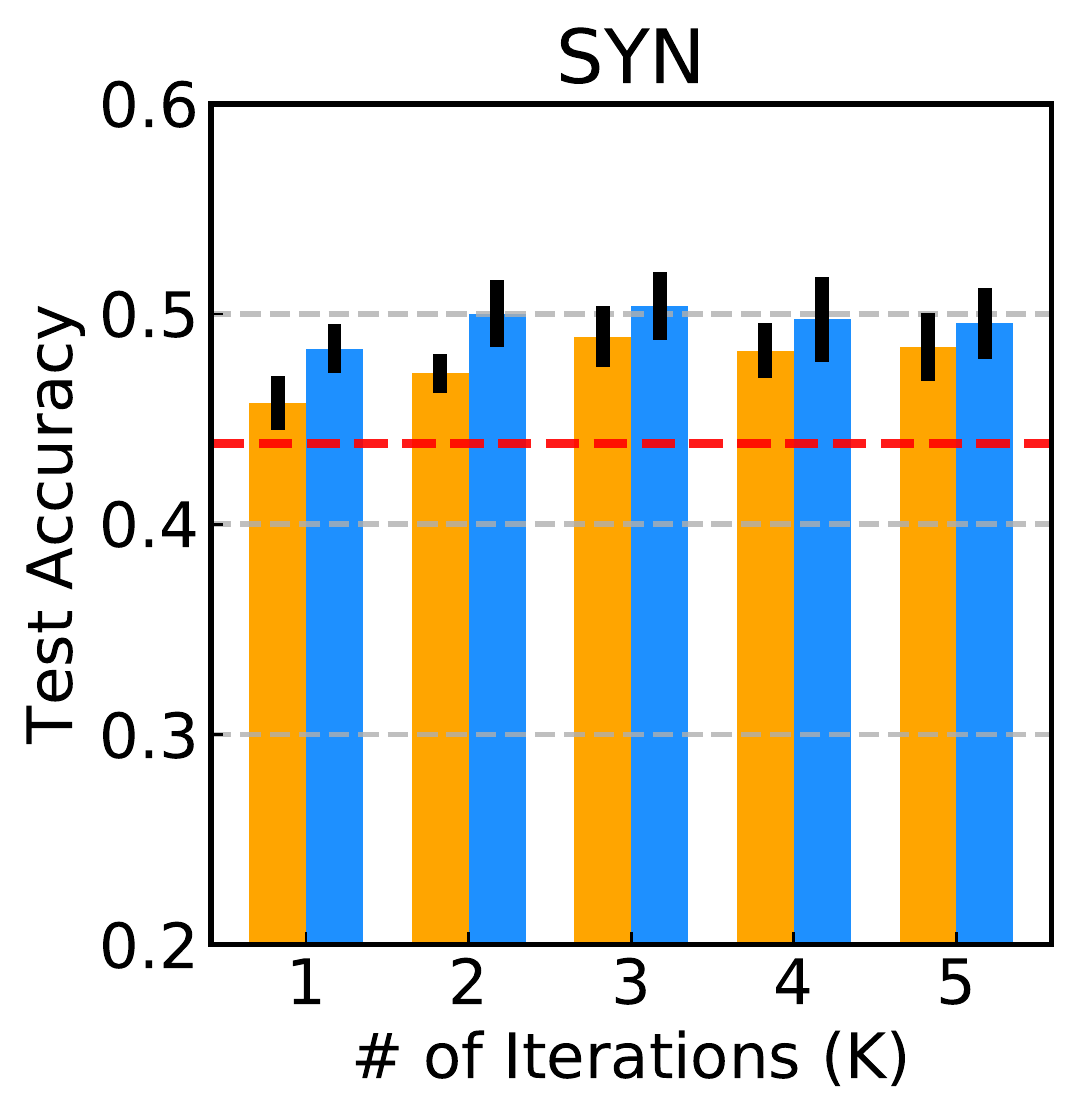}
  \includegraphics[width=.245\textwidth]{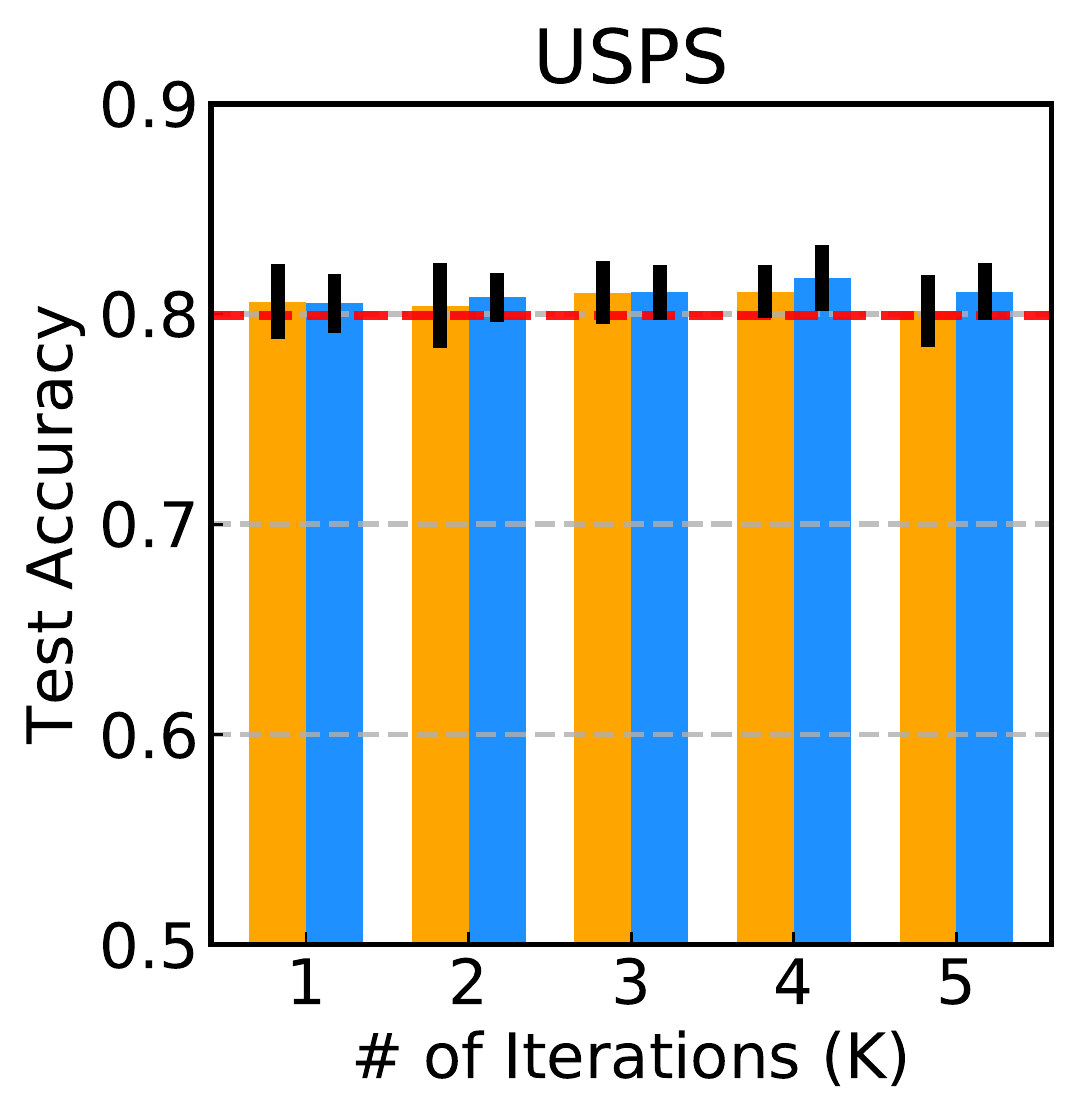}
  \vspace{-1.2em}
  \caption{Test accuracy associated with models trained using 10,000 MNIST~\cite{lecun1998gradient} samples and tested on SVHN~\cite{netzer2011reading}, MNIST-M~\cite{ganin2015unsupervised}, SYN~\cite{ganin2015unsupervised} and USPS~\cite{denker1989neural}. We compare our method (\emph{blue}) to \cite{volpi2018generalizing} (\emph{orange}) with different number of iterations $K$, and Empirical Risk Minimization (ERM)~\cite{vapnik1998statistical} (\emph{red line}). The results are averaged over ten runs; and black bars indicate the range of accuracy spanned.}
  \label{fig:mnist}
\end{figure}

\subsection{PACS}

\begin{table}[t]
\caption{Classification accuracy (\%) of our approach on PACS dataset~\cite{li2017deeper} in comparison with the previously
reported state-of-the-art results. Bold numbers indicate the best performance (two sets, one for each scenario engaging or forgoing domain identifications, respectively).}\label{tbl:pacs}
\centering
\resizebox{\linewidth}{!}{
\begin{tabular}{@{}c|cccccc|ccccc@{}}
\toprule
 & DSN & L-CNN & MLDG & Fusion & MetaReg & Epi-FCR & AGG & HEX & PAR & ADA & ME-ADA\\
\midrule
Domain ID & \cmark & \cmark & \cmark & \cmark & \cmark & \cmark & \xmark & \xmark & \xmark & \xmark & \xmark\\
\midrule
Art & 61.1 & 62.9 & 66.2 & 64.1 & \textbf{69.8} & 64.7 & 63.4 & 66.8 & 66.9 & 64.3 & \textbf{67.1}\\
Cartoon & 66.5 & 67.0 & 66.9 & 66.8 & 70.4 & \textbf{72.3} & 66.1 & 69.7 & 67.1 & 69.8 & \textbf{69.9}\\
Photo & 83.3 & 89.5 & 88.0 & 90.2 & \textbf{91.1} & 86.1 & 88.5 & 87.9 & \textbf{88.6} & 85.1 & \textbf{88.6}\\
Sketch & 58.6 & 57.5 & 59.0 & 60.1 & 59.2 & \textbf{65.0} & 56.6 & 56.3 & 62.6 & 60.4 & \textbf{63.0}\\
\midrule
Average & 67.4 & 69.2 & 70.0 & 70.3 & \textbf{72.6} & 72.0 & 68.7 & 70.2 & 71.3 & 69.9 & \textbf{72.2}\\
\bottomrule
\end{tabular}}
\end{table}

\p{Experiment Setup} We continue to experiment on PACS dataset, which consists of collections of images over four domains. Each time, one domain is selected as the test domain, and the rest three are used for training. Following~\cite{li2017deeper}, we use the ImageNet pretrained AlexNet~\cite{krizhevsky2012imagenet} as a base network. We compare with recently reported state of the art engaging domain identifications, including DSN~\cite{bousmalis2016domain}, L-CNN~\cite{li2017deeper}, MLDG~\cite{li2018learning}, Fusion~\cite{mancini2018best}, MetaReg~\cite{balaji2018metareg} and Epi-FCR~\cite{li2019episodic}, as well as methods forgoing domain identifications, including AGG~\cite{li2019episodic}, HEX~\cite{wang2019learning}, and PAR~\cite{wang2019robust}. Former methods often obtain better results because they utilize domain identifications. Our method belongs to the latter category. Other training details are provided in the supplementary material.

\p{Results} We report the results in Table~\ref{tbl:pacs}. We note that our method achieves the best performance among techniques forgoing domain identifications. More impressively, our method, without using domain identifications, is only slightly shy of MetaReg~\cite{balaji2018metareg} in terms of overall performance, which takes advantage domain identifications. Interestingly, it is also worth mentioning that our method improves previous methods with a relatively large margin when ``sketch'' is the testing domain. This is notable because ``sketch'' is the only colorless domain which owns the largest domain shift out of the four domains in PACS. Our method handles this extreme case by producing larger data shifts from the source domain with the proposed maximum-entropy term during data augmentation.

\subsection{CIFAR-10 and CIFAR-100 with Corruptions}

\p{Experiment Setup} In the following experiments, we show that our approach endows robustness to various architectures including All Convolutional Network (AllConvNet)~\cite{salimans2016weight,springenberg2014striving}, DenseNet-BC~\cite{huang2017densely} (with $k = 12$ and $d = 100$), WideResNet (40-2)~\cite{zagoruyko2016wide}, and ResNeXt-29 ($32 \times 4$)~\cite{xie2017aggregated}. We train all networks with an initial learning rate of 0.1 optimized by SGD using Nesterov momentum, and the learning rate decays following a cosine annealing schedule~\cite{loshchilov2016sgdr}. All input images are pre-processed with standard random left-right flipping and cropping in the minimization phase. We train AllConvNet and WideResNet for 100 epochs; DenseNet and ResNeXt require 200 epochs for convergence. Following the setting of~\cite{hendrycks2020augmix}, we use a weight decay of 0.0001 for DenseNet and 0.0005 otherwise. Due to the space limitation, we ask the readers to refer to the supplementary material for detailed settings of our training parameters for different architectures.

\p{Baselines} To demonstrate the utility of our approach, we compare to many state-of-the-art techniques designed for robustness to image corruptions. These baseline techniques include (i) the standard data augmentation baseline and Mixup~\cite{zhang2018mixup}; (ii) two regional regularization strategies for images, i.e., Cutout~\cite{devries2017improved} and Cutmix~\cite{yun2019cutmix}; (iii) AutoAugment~\cite{cubuk2019autoaugment}, which searches over data augmentation policies to find a high-performing data augmentation policy via reinforcement learning; (iv) Adversarial Training~\cite{kang2019testing} for model robustness against unforeseen adversaries, and Adversarial Data Augmentation~\cite{volpi2018generalizing} which generates adversarial perturbations using Wasserstein distances.

\p{Results} The results are shown in Table~\ref{tbl:cifar}. Our method enjoys the best performance and improves previous state of the art by a large margin (5\% of accuracy on CIFAR-10-C and 4\% on CIFAR-100-C). More importantly, these gains are achieved across different architectures and on both datasets. Figure~\ref{fig:corruptions} shows more detailed comparisons over all corruptions. We find that our substantial gains in robustness are spread across a wide variety of corruptions, with a small drop of performance in only three corruption types: fog, brightness and contrast. Especially, for glass blur, Gaussian, shot and impulse noises, accuracies are significantly improved by 25\%. From the Fourier perspective~\cite{yin2019fourier}, the performance gains from our adversarial perturbations lie primarily in high frequency domains, which are commonly occurring image corruptions. These results demonstrate that the maximum-entropy term can regularize networks to be more robust to common image corruptions.

\begin{table}[t]
\caption{Average classification accuracy (\%). Across several architectures, our approach obtains CIFAR-10-C and CIFAR-100-C corruption robustness that exceeds the previous state of the art by a large margin. Best performances are highlighted in bold.}\label{tbl:cifar}
\centering
\resizebox{\linewidth}{!}{
\begin{tabular}{@{}ll|cccccccc@{}}
\toprule
& & Standard & Cutout & CutMix & AutoDA & Mixup & AdvTrain & ADA & ME-ADA\\
\midrule
\multirow{4}{*}{CIFAR-10-C} & AllConvNet & 69.2 & 67.1 & 68.7 & 70.8 & 75.4 & 71.9 & 73.0 & \textbf{78.2}\\
& DenseNet & 69.3 & 67.9 & 66.5 & 73.4 & 75.4 & 72.4 & 69.8 & \textbf{76.9}\\
& WideResNet & 73.1 & 73.2 & 72.9 & 76.1 & 77.7 & 73.8 & 79.7 & \textbf{83.3}\\
& ResNeXt & 72.5 & 71.1 & 70.5 & 75.8 & 77.4 & 73.0 & 78.0 & \textbf{83.4}\\
\midrule
\multicolumn{2}{c|}{Average} & 71.0 & 69.8 & 69.7 & 74.0 & 76.5 & 72.8 & 75.1 & \textbf{80.5}\\
\midrule
\midrule
\multirow{4}{*}{CIFAR-100-C} & AllConvNet & 43.6 & 43.2 & 44.0 & 44.9 & 46.6 & 44.0 & 45.3 & \textbf{51.2}\\
& DenseNet & 40.7 & 40.4 & 40.8 & 46.1 & 44.6 & 44.8 & 45.2 & \textbf{47.8}\\
& WideResNet & 46.7 & 46.5 & 47.1 & 50.4 & 49.6 & 44.9 & 50.4 & \textbf{52.8}\\
& ResNeXt & 46.6 & 45.4 & 45.9 & 48.7 & 48.6 & 45.6 & 53.4 & \textbf{57.3}\\
\midrule
\multicolumn{2}{c|}{Average} & 44.4 & 43.9 & 44.5 & 47.5 & 47.4 & 44.8 & 48.6 & \textbf{52.3}\\
\bottomrule
\end{tabular}}
\end{table}

\begin{figure}[t]
  \centering
  \includegraphics[width=\textwidth]{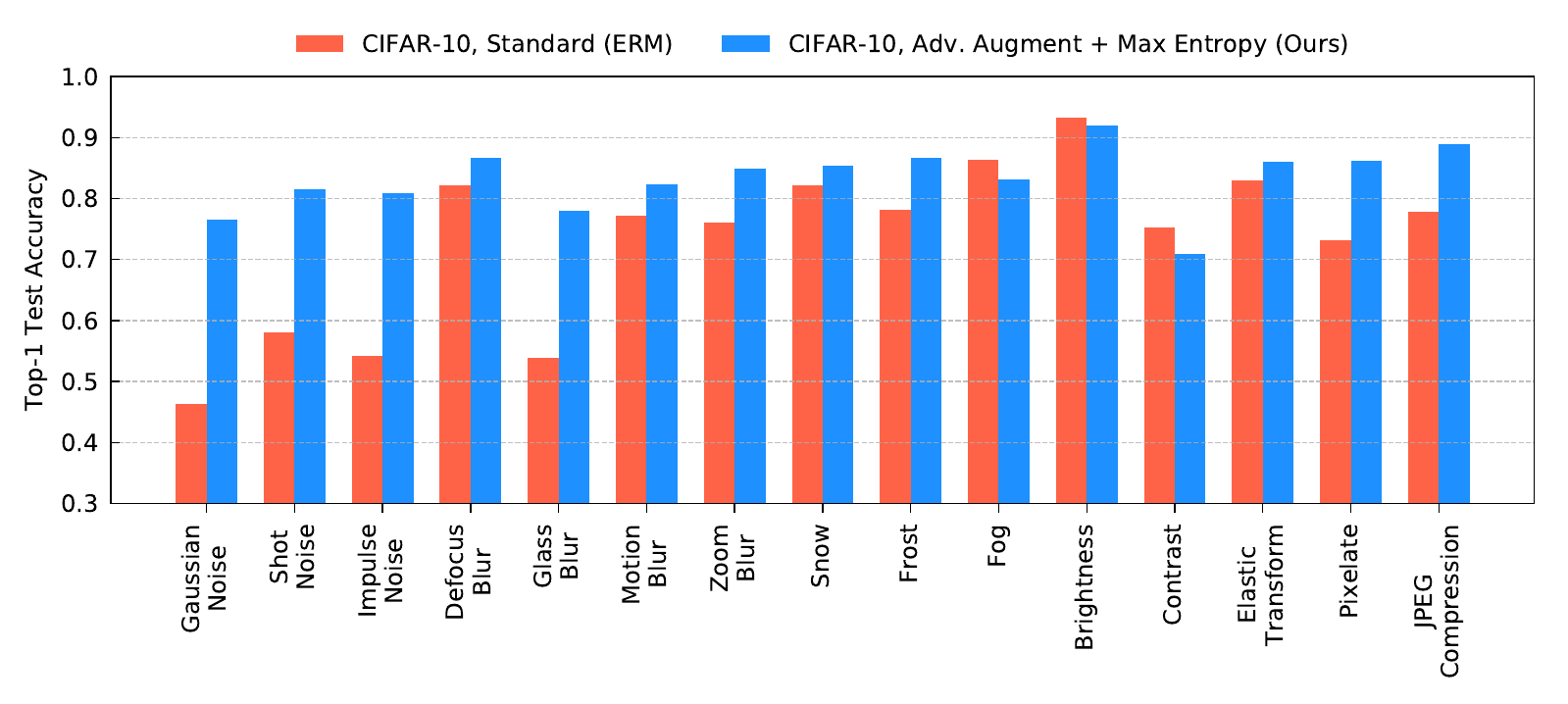}
  \vspace{-2.2em}
  \caption{Test accuracy of the Empirical Risk Minimization (ERM)~\cite{vapnik1998statistical} principle compared to our approach on the fifteen CIFAR-10-C~\cite{hendrycks2019benchmarking} corruptions using WideResNet (40-2)~\cite{zagoruyko2016wide}. Each bar represents an average over all five corruption strengths for a given corruption type.}
  \label{fig:corruptions}
\end{figure}

\section{Conclusion}

In this work, we introduced a \emph{maximum-entropy} technique that regularizes adversarial data augmentation. It encourages the model to learn with fictitious target distributions by producing ``hard'' adversarial perturbations that enlarge predictive uncertainty of the current model. As a result, the learned model is able to achieve improved robustness to large domain shifts or corruptions encountered during deployment. We demonstrate that our technique obtains state-of-the-art performance on MNIST, PACS, and CIFAR-10/100-C, and is extremely simple to implement. One major limitation of our method is that it cannot be directly applied to regression problems since the maximum-entropy lower bound is still difficult to compute in this case. Our future work might consider alternative measurements of information~\cite{ozair2019wasserstein,tschannen2020mutual} that are more suited for general machine learning applications.

\section*{Broader Impact}

The proposed method will be used to train a perception system that can robustly and reliably classify object instances. For example, this system can be used in many fundamental real-world applications in which a user desires to classify object instances from a product database, such as products found on local supermarkets or online stores. Similar to most deep learning applications learning from data which run the risk of producing biased or offensive content reflecting the training data, our work that learns a data-driven classification model is no exception. Our method moderates this issue by producing efficient fictitious target domains that are largely shifted from the source training dataset, so that the trained model on these adversarial domains are less biased. However, a downside of this moderation is the introduction of new hyper-parameters to be tuned for different tasks. Compared with other methods that obtain the same robustness but have to be trained on larger datasets, the proposed research can significantly reduce the data collection from different domains to train classification models, thereby reducing the system development time and lower related costs.

\begin{ack}
This research was funded based on partial funding to Dimitris Metaxas from NSF: IIS 1703883, CNS-1747778, CCF-1733843, IIS-1763523, IIS-1849238-825536 and MURI-Z8424104-440149.
\end{ack}

\begin{appendices}

\section{Supplementary Materials}

\subsection{Proofs}

\subsubsection{Proof of Proposition 3}

Here, we follow the guidance of~\cite{shamir2010learning} to prove Proposition~\ref{prop:epybnd}. Let $\mathcal{S}$ be a sample set of size $m$, and let $T$ be a probabilistic function of $X$ into an arbitrary finite target space, defined by $p(t|x)$ for all $x \in \mathcal{X}$ and $t \in \mathcal{T}$. To prove Proposition~\ref{prop:epybnd}, we bound the deviations of the entropy estimations from its expectation: $|\hat{H}(T)| - \mathbb{E}[\hat{H}(T)]|$, and then use a bound on the expected bias of entropy estimation.

To bound the deviation of the entropy estimates, we use McDiarmid's inequality~\cite{mcdiarmid1989}, in a manner similar to~\cite{antos2001convergence}. For this, we must bound the change in value of each of the entropy estimations when a single instance in $\mathcal{S}$ is arbitrarily changed. A useful and easily proven inequality in that regard is the following: for any natural $m$ and for any $a \in [0, 1 - 1/m]$ and $\Delta \leq 1/m$,
\begin{equation}
\label{eq:supp:delta}
\left|(a + \Delta)\log (a + \Delta) - a \log(a) \right| \leq \frac{\log(m)}{m}.
\end{equation}

With this in equality, a careful application of McDiarmid's inequality leads to the following lemma.

\begin{lemma}
\label{lem:supp:epybnd1}
For any $\delta \in (0, 1)$, with probability of at least $1 - \delta$ over the sample set, we have that,
\begin{equation}
\label{eq:supp:lemma1}
|\hat{H}(T) - \mathbb{E}[\hat{H}(T)]| \leq \frac{|\mathcal{T}|\log(m)\sqrt{\log(2/\delta)}}{\sqrt{2m}}.
\end{equation}
\end{lemma}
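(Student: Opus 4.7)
The plan is to view the empirical entropy $\hat{H}(T) = -\sum_{t \in \mathcal{T}} \hat{p}(t)\log \hat{p}(t)$, with $\hat{p}(t) = \tfrac{1}{m}\sum_{i=1}^m p(t\,|\,x_i)$, as a function of the $m$ independent samples $x_1,\ldots, x_m$, and to apply McDiarmid's bounded-differences inequality. The elementary inequality~\eqref{eq:supp:delta} already stated in the excerpt is precisely the tool needed to control the per-coordinate sensitivity of this function, after which the stated tail bound falls out by standard manipulation, matching the lemma with no slack.

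First I would establish the bounded-differences coefficient. Replacing $x_j$ with any $x_j'$ while leaving the other samples fixed produces a new empirical distribution $\hat{p}'$ satisfying $|\hat{p}'(t) - \hat{p}(t)| = \tfrac{1}{m}|p(t|x_j') - p(t|x_j)| \leq 1/m$ for every $t \in \mathcal{T}$. Applying~\eqref{eq:supp:delta} coordinate-wise gives $|\hat{p}'(t)\log\hat{p}'(t) - \hat{p}(t)\log\hat{p}(t)| \leq \log(m)/m$, and summing over $\mathcal{T}$ yields the uniform bound $|\hat{H}'(T) - \hat{H}(T)| \leq |\mathcal{T}|\log(m)/m$ on the change of $\hat{H}(T)$ under perturbation of a single coordinate.

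Next I would invoke McDiarmid's inequality with $c_j = |\mathcal{T}|\log(m)/m$ for every $j$, which gives
\[
\Pr\bigl(|\hat{H}(T) - \mathbb{E}[\hat{H}(T)]| \geq \varepsilon\bigr) \leq 2\exp\!\left(-\frac{2\varepsilon^2}{\sum_{j=1}^m c_j^2}\right) = 2\exp\!\left(-\frac{2m\varepsilon^2}{|\mathcal{T}|^2\log^2(m)}\right).
\]
Equating the right-hand side to $\delta$ and solving for $\varepsilon$ yields $\varepsilon = |\mathcal{T}|\log(m)\sqrt{\log(2/\delta)}/\sqrt{2m}$, which is exactly the bound stated in the lemma.

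The main obstacle is the clean coordinate-wise application of~\eqref{eq:supp:delta}: that inequality is phrased for $a \in [0,1-1/m]$ and a signed $\Delta$ with $|\Delta|\le 1/m$, so when $\hat{p}(t)$ lands in the boundary window $(1-1/m, 1]$ one must apply the inequality to the smaller of $\hat{p}(t)$ and $\hat{p}'(t)$, which automatically lies in $[0,1-1/m]$, with the appropriate sign of $\Delta$. This is a minor bookkeeping step rather than a substantive difficulty, and once it is handled, the remainder of the argument is entirely mechanical.
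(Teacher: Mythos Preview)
Your proposal is correct and follows essentially the same argument as the paper: bound the per-coordinate change of $\hat{H}(T)$ by $|\mathcal{T}|\log(m)/m$ via the elementary inequality~\eqref{eq:supp:delta}, then apply McDiarmid's inequality and invert. Your write-up is in fact slightly more explicit than the paper's, both in carrying out the McDiarmid computation and in flagging the boundary case $\hat{p}(t)\in(1-1/m,1]$, which the paper leaves implicit.
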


\begin{proof}
First, we bound the change caused by a single replacement in $\hat{H}(T)$. We have that,
\begin{equation}
\hat{H}(T) = -\sum_t \left( \sum_x p(t|x)\hat{p}(x) \right) \log \left( \sum_x p(t|x)\hat{p}(x) \right).
\end{equation}

If we change a single instance in $\mathcal{S}$, then there exist two pairs $(x,y)$ and $(x', y')$ such that $\hat{p}(x,y)$ increases by $1/m$, and $\hat{p}(x', y')$ decreases by $1/m$. This means that $\hat{p}(x)$ and $\hat{p}(x')$ also change by at most $1/m$, while all other values in the distribution remain the same. Therefore, for each $t \in \mathcal{T}$, $\sum_x p(t|x)\hat{p}(x)$ changes by at most $1/m$.

Based on this and Eq.~\eqref{eq:supp:delta}, $\hat{H}(T)$ changes by at most $|\mathcal{T}|\log (m)/m$. Applying McDiarmid's inequality, we get Eq.~\eqref{eq:supp:lemma1}. We have thus proven Lemma~\ref{lem:supp:epybnd1}.
\end{proof}

Lemma~\ref{lem:supp:epybnd1} provides bounds on the deviation of the $\hat{H}(T)$ from their expected values. In order to relate these to the true values of the entropy $H(T)$, we use the following bias bound from~\cite{paninski2003estimation} and \cite{shamir2010learning}.

\begin{lemma}[Paninski~\cite{paninski2003estimation}; Shamir~et~al.~\cite{shamir2010learning}, Lemma~9]
\label{lem:supp:epybnd2}
For a random variable $T \in \mathcal{T}$, with the plug-in estimation $\hat{H}(\cdot)$ on its entropy, based on an i.i.d.~sample set of size $m$, we have that,
\begin{equation}
|\mathbb{E}[H(T) - \hat{H}(T)]| \leq \log \left( 1 + \frac{|\mathcal{T}| - 1}{m} \right) \leq \frac{|\mathcal{T}| - 1}{m}.
\end{equation}
\end{lemma}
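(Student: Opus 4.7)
The plan is to reduce the bias of the plug-in entropy estimator to an expected KL divergence between the empirical and true distributions, and then to bound that divergence by two successive applications of Jensen's inequality. This route delivers the $\log(1+(|\mathcal{T}|-1)/m)$ bound directly, and the looser $(|\mathcal{T}|-1)/m$ bound then drops out from $\log(1+x)\le x$ for $x\ge 0$.

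First I would establish the identity
\begin{equation*}
H(T) - \mathbb{E}[\hat{H}(T)] \;=\; \mathbb{E}\bigl[D(\hat{p}\,\|\,p)\bigr],
\end{equation*}
where $p$ denotes the true marginal of $T$ on $\mathcal{T}$ and $\hat{p}$ denotes the empirical estimate used in the plug-in (either $N_t/m$ from counts or the averaged conditional $(1/m)\sum_i p(t|x_i)$ as used in Lemma~\ref{lem:supp:epybnd1}). The identity follows by expanding $D(\hat p\|p) = -\hat H(T) - \sum_t \hat p(t)\log p(t)$, taking expectations, and invoking the unbiasedness $\mathbb{E}[\hat p(t)]=p(t)$. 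In particular, this already shows the bias is nonnegative, so the absolute value in the statement equals $\mathbb{E}[D(\hat p\|p)]$ itself.

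Next I would bound the inner divergence. Writing $D(\hat p\|p) = \mathbb{E}_{t\sim\hat p}[\log(\hat p(t)/p(t))]$ and using concavity of $\log$ under the weighting $\hat p$ gives
\begin{equation*}
D(\hat p\,\|\,p) \;\le\; \log\sum_{t\in\mathcal{T}} \frac{\hat p(t)^{2}}{p(t)}.
\end{equation*}
Taking expectation over the sample and applying Jensen's inequality a second time, this time to pull $\log$ outside the sample expectation, yields
\begin{equation*}
\mathbb{E}\bigl[D(\hat p\,\|\,p)\bigr] \;\le\; \log\sum_{t\in\mathcal{T}} \frac{\mathbb{E}[\hat p(t)^{2}]}{p(t)}.
\end{equation*}
The remaining step is a routine second-moment calculation: $\mathbb{E}[\hat p(t)^{2}] = \mathrm{Var}(\hat p(t)) + p(t)^{2}$, and $\mathrm{Var}(\hat p(t))\le p(t)(1-p(t))/m$ because $\hat p(t)$ is an average of $m$ i.i.d.\ $[0,1]$-valued terms with mean $p(t)$ (for the count version this holds with equality; for the conditional-averaging version one uses $p(t|x)^2\le p(t|x)$ to bound the per-term variance by $p(t)(1-p(t))$). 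Substituting and summing over $\mathcal T$, each term contributes at most $(1-p(t))/m + p(t)$, so $\sum_t \mathbb{E}[\hat p(t)^{2}]/p(t) \le 1 + (|\mathcal{T}|-1)/m$. Combining with the previous display gives the first inequality, and $\log(1+x)\le x$ gives the second.

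The step I expect to be the main conceptual obstacle is selecting the right functional to Jensen-bound: the proof goes through only because the specific quantity $\sum_t \hat p(t)^{2}/p(t)$ is simultaneously (i) an upper bound for $D(\hat p\|p)$ under concavity of $\log$ with weights $\hat p$, and (ii) has an expectation that collapses exactly to $1 + (|\mathcal T|-1)/m$ via the variance bound. Once this target is identified, both Jensen steps and the moment computation are mechanical.
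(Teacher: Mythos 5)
The paper does not actually prove this lemma: it is stated with a citation to Paninski and to Shamir et al.\ (Lemma~9) and used as a black box in the proof of Proposition~3. Your argument is a correct, self-contained proof, and it is essentially the standard route taken in the cited source (Paninski's bias bound): identify the bias with $\mathbb{E}\bigl[D(\hat p\,\|\,p)\bigr]$ via unbiasedness of $\hat p$ (which also disposes of the absolute value, since the KL divergence is nonnegative), apply Jensen twice to reach $\log\sum_{t}\mathbb{E}[\hat p(t)^{2}]/p(t)$, and collapse the sum to $1+(|\mathcal{T}|-1)/m$ through the variance computation. Each step checks out, including your remark that the conditional-averaging estimator actually used in Lemma~1 of the appendix still obeys the per-term variance bound via $p(t|x)^{2}\le p(t|x)$. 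The only point worth flagging is the degenerate case $p(t)=0$: for such symbols $\hat p(t)=0$ almost surely (under either version of the estimator), so they contribute nothing to $D(\hat p\,\|\,p)$ and should simply be dropped from the sum, which only tightens the bound to $(|\mathrm{supp}(p)|-1)/m\le(|\mathcal{T}|-1)/m$. With that caveat stated, your proof supplies a complete derivation of a result the paper only cites.
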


From this lemma, the quantity $|\mathbb{E}[H(T) - \hat{H}(T)]|$ is upper bounded by $(|\mathcal{T}| - 1)/m$. Combining it with Eq.~\eqref{eq:supp:lemma1}, we get the bound in Proposition~\ref{prop:epybnd}.

\begin{proposition}[Proposition~\ref{prop:epybnd} restated]
Let $Y$ be a fixed probabilistic function of $X$ into an arbitrary finite target space $\mathcal{Y}$, determined by a fixed and known conditional probability distribution $p_{Y|X}$, and $\mathcal{S}$ be a sample set of size $N$ drawn from the joint probability distribution $p_{XY}$. For any $\delta \in (0, 1)$, with probability of at least $1 - \delta$ over the sample set $\mathcal{S}$, we have,
\begin{equation}
|H(Y) - \hat{H}(Y)| \leq \frac{|\mathcal{Y}|\log(N)\sqrt{\log(2/\delta)}}{\sqrt{2N}} + \frac{|\mathcal{Y}| - 1}{N}.
\end{equation}
\end{proposition}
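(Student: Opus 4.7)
The plan is to decompose the target deviation via the triangle inequality into a concentration term and a bias term, handle the concentration with McDiarmid's inequality, and handle the bias with a classical result about the plug-in entropy estimator. Concretely, I would write
\begin{equation*}
|H(Y) - \hat{H}(Y)| \;\leq\; \bigl|\hat{H}(Y) - \mathbb{E}[\hat{H}(Y)]\bigr| \;+\; \bigl|\mathbb{E}[\hat{H}(Y)] - H(Y)\bigr|,
\end{equation*}
and aim to bound the first summand by $|\mathcal{Y}|\log(N)\sqrt{\log(2/\delta)}/\sqrt{2N}$ and the second by $(|\mathcal{Y}|-1)/N$, so that summing yields the stated bound.

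For the concentration piece, I would view $\hat{H}(Y)$ as a function of the $N$ i.i.d.\ samples in $\mathcal{S}$ and bound the bounded-differences constant. Recall that $\hat{H}(Y) = -\sum_{y} q(y)\log q(y)$ with $q(y) = \sum_x p_{Y|X}(y|x)\hat{p}(x)$. Swapping a single sample changes $\hat{p}$ at two points by at most $1/N$, which in turn moves each $q(y)$ by at most $1/N$. Using the elementary inequality $|(a+\Delta)\log(a+\Delta) - a\log a| \leq \log(N)/N$ valid for $a \in [0,1-1/N]$ and $|\Delta| \leq 1/N$, I get that each of the $|\mathcal{Y}|$ terms in the sum changes by at most $\log(N)/N$, so the total change is at most $|\mathcal{Y}|\log(N)/N$. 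McDiarmid's inequality with this Lipschitz constant then yields, with probability at least $1-\delta$,
\begin{equation*}
\bigl|\hat{H}(Y) - \mathbb{E}[\hat{H}(Y)]\bigr| \;\leq\; \frac{|\mathcal{Y}|\log(N)\sqrt{\log(2/\delta)}}{\sqrt{2N}}.
\end{equation*}

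For the bias piece, I would invoke the well-known bound on the bias of the plug-in entropy estimator due to Paninski, which gives $|\mathbb{E}[\hat{H}(Y)] - H(Y)| \leq \log(1 + (|\mathcal{Y}|-1)/N) \leq (|\mathcal{Y}|-1)/N$ using $\log(1+t)\leq t$. Combining the two pieces via the initial triangle inequality gives the claim.

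The main obstacle, and the only place that requires care, is the bounded-differences constant in the McDiarmid step: one has to be precise that swapping a single sample perturbs two values of $\hat{p}$ (one up by $1/N$ and one down by $1/N$), which still shifts each marginal $q(y)$ by at most $1/N$, and then to apply the elementary $a\log a$ perturbation bound uniformly across all $|\mathcal{Y}|$ coordinates without double-counting. Once that Lipschitz estimate is pinned down, the rest is direct application of standard inequalities, and the overall structure mirrors the argument in Shamir et al. that is explicitly referenced in the paper.
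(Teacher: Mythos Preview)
Your proposal is correct and follows essentially the same route as the paper's proof: a triangle-inequality split into a concentration term bounded via McDiarmid (with the same $|\mathcal{Y}|\log(N)/N$ bounded-differences constant derived from the $a\log a$ perturbation inequality) and a bias term bounded via Paninski's classical estimate. The paper packages these as two lemmas and then combines them exactly as you describe, so there is nothing materially different between your argument and theirs.
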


\begin{proof}
To prove the proposition, we start by using the triangular inequality to write,
\begin{equation}
|H(Y) - \hat{H}(Y)| \leq |H(Y) - \mathbb{E}[\hat{H}(Y)]| + |\hat{H}(Y) - \mathbb{E}[\hat{H}(Y)]|.
\end{equation}

Because $H(Y)$ is constant, we have:
\begin{equation}
|H(Y) - \mathbb{E}[\hat{H}(Y)]| = |\mathbb{E}[H(Y)] - \mathbb{E}[\hat{H}(Y)]|.
\end{equation}
By the linearity of expectation, we have:
\begin{equation}
|\mathbb{E}[H(Y)] - \mathbb{E}[\hat{H}(Y)]| = |\mathbb{E}[H(Y) - \hat{H}(Y)]|.
\end{equation}
Combining these with Lemmas~\ref{lem:supp:epybnd1} and \ref{lem:supp:epybnd2}, we get the bound in Proposition~\ref{prop:epybnd}.
\end{proof}

\subsubsection{Proof of Corollary 1}

The proof of Corollary~\ref{cor:nondet} is based on the following bound proposed by~\cite{kolchinsky2019caveats}.

\begin{lemma}[Kolchinsky~et~al.~\cite{kolchinsky2019caveats}, Theorem~1]
\label{lem:supp:nondet}
Let $Z$ be a random variable (continuous or discrete), and $Y$ be a random variable with a finite set of outcomes $\mathcal{Y}$. Consider two joint distributions over $Z$ and $Y$, $p_{ZY}$ and $\tilde{p}_{ZY}$, which have the same marginal over $Z$, $p(z) = \tilde{p}(z)$, and obey ${|p_{ZY} - \tilde{p}_{ZY}|}_1 \leq \epsilon \leq \frac{1}{2}$. Then,
\begin{equation}
\label{eq:supp:nondet}
\left| H(p(Y|X)) - H(\tilde{p}(Y|X)) \right| \leq -\epsilon \log \frac{\epsilon}{{|\mathcal{Y}|}^3}.
\end{equation}
\end{lemma}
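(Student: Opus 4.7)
The plan is to reduce this conditional‑entropy continuity estimate to a classical Fannes‑type continuity estimate for Shannon entropy on the finite alphabet $\mathcal{Y}$, applied slice‑by‑slice in $z$. Throughout I read the statement with $Z$ as the conditioning variable on both sides (no other random variable has been introduced in the hypothesis), and assume regular conditional distributions $p(\cdot\mid z)$, $\tilde{p}(\cdot\mid z)$ exist so that the disintegration of conditional entropy is valid.

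First, exploit the shared marginal. Because $\tilde{p}(z)=p(z)$, both conditional entropies integrate against the same measure, so
\begin{equation*}
H(p(Y|Z)) - H(\tilde{p}(Y|Z)) \;=\; \int p(z)\bigl[\,H(p(\cdot\mid z)) - H(\tilde{p}(\cdot\mid z))\,\bigr]\,dz .
\end{equation*}
Setting $\delta(z):=\|p(\cdot\mid z)-\tilde{p}(\cdot\mid z)\|_1$ and using $p_{ZY}(z,y)=p(z)p(y\mid z)$ and $\tilde{p}_{ZY}(z,y)=p(z)\tilde{p}(y\mid z)$, the joint $L^1$ distance factors as
\begin{equation*}
\|p_{ZY}-\tilde{p}_{ZY}\|_1 \;=\; \int p(z)\,\delta(z)\,dz \;=\; \mathbb{E}_{Z\sim p}[\delta(Z)] \;\le\; \epsilon .
\end{equation*}

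Second, establish a pointwise Fannes‑type bound of the required shape. For any two distributions on $\mathcal{Y}$ at $L^1$ distance $\delta\in(0,1]$, the (refined) Fannes inequality gives $|H(p)-H(q)|\le \delta\log(|\mathcal{Y}|-1)+h(\delta)$, with $h$ the binary entropy. Using $h(\delta)=-\delta\log\delta-(1-\delta)\log(1-\delta)\le -\delta\log\delta+\delta$ on $[0,\tfrac12]$, and $\delta+\delta\log(|\mathcal{Y}|-1)\le 3\delta\log|\mathcal{Y}|$ (valid for $|\mathcal{Y}|\ge 2$ in either log base), one obtains the clean bound
\begin{equation*}
|H(p(\cdot\mid z)) - H(\tilde{p}(\cdot\mid z))| \;\le\; F(\delta(z)), \qquad F(\delta) \;:=\; -\delta\log\!\bigl(\delta/|\mathcal{Y}|^{3}\bigr) .
\end{equation*}
For any $z$ with $\delta(z)>\tfrac12$ one falls back on the trivial $|H(p)-H(q)|\le\log|\mathcal{Y}|\le F(\delta(z))$, so the pointwise estimate holds for every $z$.

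Third, integrate and invoke Jensen. The function $F(\delta)=-\delta\log\delta+3\delta\log|\mathcal{Y}|$ has $F''(\delta)=-1/\delta<0$, so it is concave on $(0,1]$; and $F'(\delta)=3\log|\mathcal{Y}|-\log\delta-1>0$ whenever $\delta< e^{-1}|\mathcal{Y}|^{3}$, which certainly covers $[0,\tfrac12]$ for $|\mathcal{Y}|\ge 2$. Hence
\begin{equation*}
\bigl|H(p(Y|Z))-H(\tilde{p}(Y|Z))\bigr| \;\le\; \int p(z)\,F(\delta(z))\,dz \;\le\; F\!\bigl(\mathbb{E}[\delta(Z)]\bigr) \;\le\; F(\epsilon) \;=\; -\epsilon\log\!\bigl(\epsilon/|\mathcal{Y}|^{3}\bigr),
\end{equation*}
which is exactly the stated inequality.

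The main obstacle is the bookkeeping in passing from the sharp refined Fannes inequality to the specific shape $-\delta\log(\delta/|\mathcal{Y}|^3)$: one must carefully absorb the remainder $-(1-\delta)\log(1-\delta)$ and the $\log(|\mathcal{Y}|-1)$ factor into the extra $|\mathcal{Y}|^{2}$ sitting inside the logarithm, while confirming that the bound holds uniformly in $\delta\in(0,1]$ and in $|\mathcal{Y}|\ge 2$ (the edge case $|\mathcal{Y}|=2$ is the tightest). Once that pointwise bound is secured, the remainder is essentially a one‑line Jensen argument driven by the concavity and monotonicity of $F$ on the relevant interval.
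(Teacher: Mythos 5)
The paper never proves this lemma: it is imported verbatim from Kolchinsky et al.\ (Theorem~1 of \cite{kolchinsky2019caveats}) and used as a black box in the proof of Corollary~\ref{cor:nondet}, so there is no internal proof to compare against --- your submission is a genuine addition rather than an alternative route. Your argument is correct. You rightly read the mis-typed $H(p(Y|X))$ as the conditional entropy of $Y$ given $Z$ (the only conditioning variable in the hypotheses); the identity $\|p_{ZY}-\tilde p_{ZY}\|_1=\mathbb{E}_{Z}[\delta(Z)]$ is exactly where the shared marginal is used; the slice-wise bound $|H(p(\cdot\mid z))-H(\tilde p(\cdot\mid z))|\le F(\delta(z))$ with $F(\delta)=-\delta\log(\delta/|\mathcal{Y}|^3)$ holds, your fallback for $\delta(z)>\tfrac12$ is valid because $F$ is increasing on all of $(0,2]$, and the concavity-plus-monotonicity Jensen step closes the argument. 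Two remarks. First, the detour through the refined Fannes inequality is unnecessary: the standard $L^1$ continuity bound for entropy (Cover--Thomas, Thm.~17.3.3) gives $|H(p)-H(q)|\le -\delta\log(\delta/|\mathcal{Y}|)$ directly for $\delta\le\tfrac12$, which is stronger than your $F(\delta)$ and feeds into the same Jensen step; this also shows the $|\mathcal{Y}|^3$ in the statement is loose (a single power of $|\mathcal{Y}|$ suffices for the conditional-entropy difference alone --- the cube is inherited from the original theorem, which controls a larger combination of entropy terms). Second, a base-dependence nit: your intermediate inequality $h(\delta)\le-\delta\log\delta+\delta$ is true in nats but fails in base~2 (where the additive term is $\delta\log_2 e$, e.g.\ at $\delta=0.3$); the slack in $3\log|\mathcal{Y}|$ absorbs the corrected constant for every $|\mathcal{Y}|\ge 2$, so the conclusion is unaffected, but the step as written should either fix the base or the constant.
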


This lemma upper bounds the quantity $|H(p(Y|X)) - H(\tilde{p}(Y|X))|$ by $-\epsilon \log \log (\epsilon / {|\mathcal{Y}|}^3)$. After extending it to the case when $Y$ is a deterministic function of $X$, we get the bound in Corollary~\ref{cor:nondet}.

\begin{corollary}[Corollary~\ref{cor:nondet} restated]
Let $X$ be a random variable and $Y$ be a random variable with a finite set of outcomes $\mathcal{Y}$. Let $p_{XY}$ be a joint distribution over $X$ and $Y$ under which $Y = f(X)$. Let $\tilde{p}_{XY}$ be a joint distribution over $X$ and $Y$ which has the same marginal over $X$ as $p_{XY}$, i.e., $\tilde{p}_X = p_X$, and obey ${|\tilde{p}_{XY} - p_{XY}|}_1 \leq \epsilon \leq \frac{1}{2}$. Then, we have that,
\begin{equation}
\left| H(\tilde{p}(Y|X)) \right| \leq -\epsilon \log \frac{\epsilon}{{|\mathcal{Y}|}^3}.
\end{equation}
\end{corollary}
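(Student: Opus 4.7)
\textbf{Proof plan for Corollary~\ref{cor:nondet}.} The plan is to derive the bound as an immediate specialization of Lemma~\ref{lem:supp:nondet} once we identify the deterministic reference distribution. Concretely, I would instantiate Lemma~\ref{lem:supp:nondet} with the continuous/discrete variable $Z$ in the lemma set equal to $X$, with $p_{ZY} := p_{XY}$ and $\tilde{p}_{ZY} := \tilde{p}_{XY}$. The hypotheses of the lemma then transfer directly from the hypotheses of the corollary: the marginal constraint $p(z)=\tilde p(z)$ is exactly $\tilde p_X = p_X$, and the $\ell_1$ bound $|p_{ZY}-\tilde p_{ZY}|_1 \leq \epsilon \leq 1/2$ is identical to the one assumed on $p_{XY}$ and $\tilde p_{XY}$. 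Applying the lemma yields
\begin{equation*}
\bigl|H(p(Y|X)) - H(\tilde p(Y|X))\bigr| \;\leq\; -\epsilon \log \frac{\epsilon}{|\mathcal{Y}|^3}.
\end{equation*}

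The next step is to observe that under $p_{XY}$ we have $Y=f(X)$, so for every $x$ the conditional $p(Y|X=x)$ is a point mass on $f(x)$ and thus has zero Shannon entropy. Taking the expectation over $X$, this gives $H(p(Y|X))=0$. Substituting this into the bound above collapses the left-hand side to $|H(\tilde p(Y|X))|$, which is exactly the statement of the corollary.

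The argument is therefore a two-line deduction: a direct invocation of Lemma~\ref{lem:supp:nondet} followed by the observation that the deterministic distribution $p$ contributes a zero conditional entropy. There is no real obstacle here; the only small check is to confirm that the lemma's hypotheses on $(Z,Y)$ are matched by $(X,Y)$ with the marginals aligned and the total-variation-like $\ell_1$ distance preserved, both of which are given. No additional estimation or concentration machinery is required, and the bound of the lemma is tight enough to serve as the corollary's bound verbatim.
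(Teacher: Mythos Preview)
Your proposal is correct and matches the paper's own proof essentially line for line: invoke Lemma~\ref{lem:supp:nondet} with $Z=X$, then use $H(p(Y|X))=0$ from the determinism assumption $Y=f(X)$ to collapse the left-hand side. The paper's argument is the same two-step deduction, only stated more tersely.
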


\begin{proof}
Since $Y$ is a deterministic function of $X$, i.e., $Y = f(X)$, we then have $H(p(Y|X)) = 0$. Combining this with Eq.~\eqref{eq:supp:nondet}, we prove the bound in Corollary~\ref{cor:nondet}.
\end{proof}

\subsection{Implementation Details}

\subsubsection{BNN Variant}

We follow~\cite{blundell2015weight} to implement the BNN variant of our method. Let $x$ be the observed input variable and $\theta$ be a set of latent variables. Deep neural networks can be viewed as a probabilistic model $p(y|x, \theta)$, where $\mathcal{D} = {\{x_i, y_i\}}_i$ is a set of training examples and $y$ is the network output which belongs to a set of object categories by using the network parameters $\theta$. The variational inference aims to calculate this conditional probability distribution over the latent variables (network parameters) by finding the closest proxy to the exact posterior by solving an optimization problem.

Following the guidance of~\cite{blundell2015weight}, we first assume a family of probability densities over the latent variables $\theta$ parameterized by $\psi$, i.e., $q(\theta|\psi)$. We then find the closest member of this family to the true conditional probability $p(\theta|\mathcal{D})$ by minimizing the KL-divergence between $q(\theta|\psi)$ and $p(\theta|\mathcal{D})$, which is equivalent to minimizing the following variational free energy:
\begin{equation}
\mathcal{L}_\mathsf{BNN}(\theta, \mathcal{D}) = \mathbb{D}_\mathsf{KL} \left[ q(\theta|\psi) \middle\| p(\theta) \right] - \mathbb{E}_{q(\theta|\psi)} \left[ \log (p(\mathcal{D}|\theta)) \right].
\end{equation}
This objective function can be approximated using $T$ Monte Carlo samples $\theta_i$ from the variational posterior~\cite{blundell2015weight}:
\begin{equation}
\mathcal{L}_\mathsf{BNN}(\theta, \mathcal{D}) \approx \sum_{i=1}^{T} \log (q(\theta_i|\psi)) - \log (p(\theta_i)) - \log (p(\mathcal{D}|\theta_i)).
\end{equation}

We assume $q(\theta|\psi)$ have a Gaussian probability density function with diagonal covariance and parameterized by $\psi = (\mu, \sigma)$. A sample weight of the variational posterior can be obtained by the reparameterization trick~\cite{diederik2014auto}: we sample it from a unit Gaussian and parameterized by $\theta = \mu + \sigma \circ \epsilon$, where $\epsilon$ is the noise drawn from the unit Gaussian $\mathcal{N}(0, 1)$ and $\circ$ is the point-wise multiplication. For the prior, as suggested by~\cite{blundell2015weight}, a scale mixture of two Gaussian probability density functions are chosen: they are zero-centered but have two different variances of $\sigma_1^2$ and $\sigma_2^2$ with the ratio of $\pi$. In this work, we let $-\log \sigma_1 = 0$, $-\log \sigma_2 = 6$, and $\pi=0.25$. Then, the optimizing objective of adversarial perturbations in the maximization phase of our method is redefined by:
\begin{equation}
X_i^k \in \argmax_{x \in \mathcal{X}} \left\{ \mathcal{L}_\mathsf{CE}(\theta; x, Y_i) + \beta \sum_{j=1}^{T} \frac{1}{T} h(\theta_j; x) - \gamma c_\theta((x, Y_i), (X_i^{k-1}, Y_i)) \right\},
\end{equation}
where $\theta_j$ is sampled $T$ times from the learned variational posterior.

\subsubsection{PACS}

The learning principle of the previous state-of-the-art method on this dataset follows two streams. The first stream of methods, including DSN~\cite{bousmalis2016domain}, L-CNN~\cite{li2017deeper}, MLDG~\cite{li2018learning}, Fusion~\cite{mancini2018best}, MetaReg~\cite{balaji2018metareg} and Epi-FCR~\cite{li2019episodic}, engages domain identifications, which means that when training the model, each source domain is regarded as a separate domain. The second stream of methods, containing AGG~\cite{li2019episodic}, HEX~\cite{wang2019learning}, and PAR~\cite{wang2019robust}, does not leverage domain identifications and combines all source domains into a single one during the training procedure. We can find that the first stream leverages more information, i.e., the domain identifications, during the network training, and thus often yields better performance than the second stream. Our work belongs to the latter stream.

\begin{table}[h]
\caption{The settings of different target domains on PACS.}\label{tbl:supp:pacs}
\centering
\resizebox{.86\linewidth}{!}{
\begin{tabular}{@{}ccccccccc@{}}
\toprule
Target Domain & $K$ & $T$ (loop) & $T_\mathsf{MIN}$ (loop) & $T_\mathsf{MAX}$ (loop) & $\alpha$ & $\eta$ & $\gamma$ & $\beta$\\
\midrule
Art & 1 & 45,000 & 100 & 50 & 0.001 & 50.0 & 10.0 & 1.0\\
Cartoon & 1 & 45,000 & 100 & 50 & 0.001 & 50.0 & 10.0 & 100.0\\
Photo & 1 & 45,000 & 100 & 50 & 0.001 & 50.0 & 10.0 & 1.0\\
Sketch & 1 & 45,000 & 100 & 50 & 0.001 & 50.0 & 10.0 & 100.0\\
\bottomrule
\end{tabular}}
\end{table}

We follow the setup of~\cite{li2017deeper} for network training. To align with the previous methods, the ImageNet pretrained AlexNet~\cite{krizhevsky2012imagenet} is employed as the baseline network. In the network training, we set the batch size to 32. We use SGD with the learning rate of 0.001 (the learning rate decays following a cosine annealing schedule~\cite{zagoruyko2016wide}), the momentum of 0.9, and weight decay of 0.00005 for minimization, while we use the SGD with the learning rate of 50.0 for maximization. Table~\ref{tbl:supp:pacs} shows more detailed setting of all parameters under four different target domains.

\subsubsection{CIFAR-10 and CIFAR-100}

\begin{table}[h]
\caption{The settings of different network architectures on CIFAR-10-C and CIFAR-100-C.}\label{tbl:supp:cifar}
\centering
\resizebox{\linewidth}{!}{
\begin{tabular}{@{}llcccccccc@{}}
\toprule
& & $K$ & $T$ (epoch) & $T_\mathsf{MIN}$ (epoch) & $T_\mathsf{MAX}$ (loop) & $\alpha$ & $\eta$ & $\gamma$ & $\beta$\\
\midrule
\multirow{4}{*}{CIFAR-10-C} & AllConvNet & 2 & 100 & 10 & 15 & 0.1 & 20.0 & 0.1 & 10.0\\
& DenseNet & 2 & 200 & 10 & 15 & 0.1 & 20.0 & 1.0 & 100.0\\
& WideResNet & 2 & 100 & 10 & 15 & 0.1 & 20.0 & 1.0 & 10.0\\
& ResNeXt & 2 & 200 & 10 & 15 & 0.1 & 20.0 & 1.0 & 10.0\\
\midrule
\multirow{4}{*}{CIFAR-100-C} & AllConvNet & 2 & 100 & 10 & 15 & 0.1 & 20.0 & 0.1 & 10.0\\
& DenseNet & 2 & 200 & 10 & 15 & 0.1 & 20.0 & 10.0 & 10.0\\
& WideResNet & 2 & 100 & 10 & 15 & 0.1 & 20.0 & 1.0 & 10.0\\
& ResNeXt & 2 & 200 & 10 & 15 & 0.1 & 20.0 & 10.0 & 10.0\\
\bottomrule
\end{tabular}}
\end{table}

The experimental settings follow the setups in~\cite{hendrycks2020augmix}. We use SGD for both minimization and maximization. In Table~\ref{tbl:supp:cifar}, we report the detailed settings of all parameters under different network architectures on CIFAR-10-C and CIFAR-100-C. Note that $T$ and $T_\mathsf{MIN}$ are measured by number of training epoches, while $T_\mathsf{MAX}$ is measured by number of iterations. In this work, we do not compare our method with \cite{hendrycks2020augmix}, since the design of \cite{hendrycks2020augmix} depends on a set of pre-defined image corruptions which is with a different research target compared to our method.

\end{appendices}

\bibliographystyle{plain}
\small
\bibliography{refs}

\end{document}